\documentclass{article}

\usepackage{microtype}
\usepackage{graphicx}
\usepackage{subcaption}
\usepackage{booktabs} 

\usepackage{hyperref}

\usepackage[accepted]{icml2026}

\usepackage{amsmath}
\usepackage{amssymb}
\usepackage{mathtools}
\usepackage{amsthm}

\usepackage[capitalize,noabbrev]{cleveref}

\theoremstyle{plain}
\newtheorem{theorem}{Theorem}[section]

\newtheorem{lemma}[theorem]{Lemma}
\newtheorem{fact}[theorem]{Fact}
\newtheorem{claim}[theorem]{Claim}
\newtheorem{corollary}[theorem]{Corollary}
\theoremstyle{definition}
\newtheorem{definition}[theorem]{Definition}

\newtheorem{remark}[theorem]{Remark}

\usepackage[textsize=tiny]{todonotes}

\newcommand{\E}{\mathbb{E}}

\newcommand{\N}{\mathbb{N}}
\renewcommand{\P}{\mathbb{P}}

\newcommand{\V}{\mathbb{V}}
\newcommand{\R}{\mathbb{R}}

\newcommand{\cA}{\mathcal{A}}

\newcommand{\cC}{\mathcal{C}}

\newcommand{\cG}{\mathcal{G}}

\newcommand{\cM}{\mathcal{M}}

\newcommand{\cO}{\mathcal{O}}

\newcommand{\iprod}[2]{\left\langle #1,\, #2\right\rangle}
\newcommand{\bracket}[1]{\left[ #1\right]}
\newcommand{\norm}[1]{\left\| #1\right\|_{2}}
\newcommand{\bigo}[1]{\cO\left( #1\right)}
\newcommand{\bigot}[1]{\tilde\cO\left( #1\right)}
\newcommand{\paren}[1]{\left( #1\right)}

\DeclareMathOperator*{\argmin}{\arg\!\min}

\DeclarePairedDelimiter{\ceil}{\lceil}{\rceil}

\usepackage[utf8]{inputenc} 
\usepackage[T1]{fontenc}    
\usepackage{hyperref}       
\usepackage{url}            
\usepackage{booktabs}       
\usepackage{amsfonts}       
\usepackage{nicefrac}       
\usepackage{microtype}      
\usepackage{xcolor}         

\definecolor{mygreen}{rgb}{0.0, 0.5, 0.0}
\definecolor{winered}{rgb}{0.8,0,0}
\definecolor{myblue}{rgb}{0,0,0.8}

\hypersetup{
    colorlinks=true,
    linkcolor={winered},
    citecolor={myblue}
}

\icmltitlerunning{A Short and Unified Convergence Analysis of the SAG, SAGA, and IAG Algorithms}

\begin{document}

\twocolumn[
  \icmltitle{A Short and Unified Convergence Analysis of the SAG, SAGA, and IAG Algorithms}



  \icmlsetsymbol{equal}{*}

  \begin{icmlauthorlist}
    \icmlauthor{Feng Zhu}{yyy}
    \icmlauthor{Robert W. Heath Jr.}{comp}
    \icmlauthor{Aritra Mitra}{yyy}
  \end{icmlauthorlist}

  \icmlaffiliation{yyy}{Department of Electrical and Computer Engineering, North Carolina State University, Raleigh, USA}
  \icmlaffiliation{comp}{Department of Electrical and Computer Engineering, University of California, San Diego, USA}

  \icmlcorrespondingauthor{Aritra Mitra}{amitra2@ncsu.edu}

  \icmlkeywords{Machine Learning, ICML}

  \vskip 0.3in
]



\printAffiliationsAndNotice{}  

\begin{abstract}

Stochastic variance-reduced algorithms such as Stochastic Average Gradient (SAG) and SAGA, and their deterministic counterparts like the Incremental Aggregated Gradient (IAG) method, have been extensively studied in large-scale machine learning. Despite their popularity, existing analyses for these algorithms are disparate, relying on different proof techniques tailored to each method. Furthermore, the original proof of SAG is known to be notoriously involved, requiring computer-aided analysis. Focusing on finite-sum optimization with smooth and strongly convex objectives, our main contribution is to develop a single unified convergence analysis that applies to all three algorithms: SAG, SAGA, and IAG. Our analysis features two key steps: (i) establishing a bound on delays due to sub-sampling using simple concentration tools, and (ii) carefully designing a novel Lyapunov function that accounts for such delays. The resulting proof is short and modular, providing high-probability bounds for SAG and SAGA that can be seamlessly extended to  non-convex objectives and Markov sampling. As an immediate byproduct of our new analysis technique, we obtain the best known rates for the IAG algorithm, significantly improving upon prior bounds.
\end{abstract}
\vspace{-8mm}
\section{Introduction}
\label{sec:Intro}
We consider the following finite-sum optimization problem: 
\vspace{-3mm}
\begin{equation}
    \min_{x\in\R^d} f(x):= \frac{1}{N}\sum_{i=1}^N f_i(x), \label{problem:main}
\end{equation}
where each $f_i: \mathbb{R}^d \to \mathbb{R}$ is assumed to be $L$-smooth, $f$ is $\mu$-strongly convex, and $x^* = \argmin_{x \in \mathbb{R}^d} f(x)$ is the minimizer of the composite function $f(x)$. Problems of the form in~\eqref{problem:main} arise in the context of empirical risk minimization in machine learning, where $f(x)$ serves as a 
finite-sample approximation (based on $N$ samples) of a true risk function. 

A natural way to solve~\eqref{problem:main} is to run the gradient descent (GD) algorithm. When $f$ is $L$-smooth and $\mu$-strongly convex, GD guarantees exponentially fast convergence to $x^*$, where the exponent of convergence depends on the \emph{condition number} $\kappa=L/\mu$~\cite{bubeck2015convex}. This fast \emph{linear} convergence rate, however, comes at the expense of $N$ gradient evaluations per iteration, which can be computationally demanding when $N$ is large. An appealing alternative is the stochastic gradient descent (SGD) algorithm~\cite{robbins1951stochastic} which, at each iteration, moves along the negative gradient of just one component function chosen uniformly at random from the set $[N]:=\{1, 2, \ldots, N\}.$ While SGD evaluates only one gradient per iteration, the high variance in its update direction necessitates a diminishing step-size sequence to ensure exact convergence to $x^*$ with no residual bias. Unfortunately, this leads to a much slower sublinear rate~\cite{moulines2011non}.  

\textbf{Variance-Reduction Algorithms.} A breakthrough in this regard was achieved by~\citet{SAG}, who invented the stochastic average gradient (SAG) algorithm. Like SGD, SAG evaluates only a single gradient in each iteration, but exploits memory of past gradients of all components. Remarkably, this approach is able to retain the linear convergence rate of GD. However, the proof of this result in~\citet{SAG_proof} is notoriously challenging, and requires computer-aided analysis. Although a related algorithm called SAGA with a simpler proof was developed by~\citet{SAGA}, the Lyapunov function in this paper fails to explain the convergence behavior of SAG. A deterministic variant of SAG, called the incremental aggregated gradient (IAG) algorithm, was developed by~\citet{blatt2007convergent}, and an explicit linear convergence rate for this algorithm was obtained in~\citet{gurbuzbalaban2017convergence}. However, the analysis of IAG, which is fundamentally different from those of SAG and SAGA, yields a much slower rate compared to SAG, SAGA, and GD. 

Our main {contribution} is to develop a \textbf{single unified proof} that is surprisingly short and simple, and yields linear convergence rates for SAG, SAGA, \emph{and} IAG. Furthermore, our analysis significantly improves the best known rate for IAG. 


In what follows, we elaborate on our main \textbf{contributions}, and discuss their implications in relation to prior work. 

$\bullet$ \textbf{Unified Proof Technique.} Stochastic variance-reduced methods like SAG and SAGA, and their deterministic counterparts like IAG, all use memory of past gradients to maintain accurate estimates of the full gradient. However, despite the similarity in their update rules, existing convergence analyses of these algorithms differ considerably. In particular, the difficulty in analyzing SAG has often been attributed to the fact that its update direction is \emph{biased}, unlike those for SGD and SAGA that admit relatively simpler proofs. In~\citet{gurbuzbalaban2017convergence}, the authors mention: ``\emph{We also note that most of the proofs and proof techniques
used in the stochastic setting such as the fact that the expected gradient error is zero do not apply to the deterministic setting and this requires a new approach for analyzing IAG.}'' Thus, whether a unified analysis framework can explain the dynamics of both biased and unbiased, stochastic and deterministic, variance-reduced algorithms is far from obvious. We show for the first time that this is indeed possible. 

Our proof framework is simple, intuitive, and modular, and features two key steps. In the first step, for stochastic sub-sampling patterns, we use a concentration argument to control the maximum delay in seeing any component function. As a result, on a ``good'' event of sufficient measure, one can view both SAG and SAGA as delayed versions of GD, with an upper-bound on the delays that scales as $\bigot{N}$. Based on this insight, in the second step, we construct a novel Lyapunov function that maintains a window of stale gradients. The careful construction of this function is crucial to us achieving our desired rates. We then establish a one-step contractive recursion for this Lyapunov function, which translates to \emph{high-probability} linear convergence rates for SAG and SAGA in Theorem~\ref{thm:sc}. Our Lyapunov function and overall proof structure (outlined above) depart fundamentally from the analysis of SAG and SAGA. 

$\bullet$ \textbf{High-Probability Bounds for SAG and SAGA.} The traditional analyses of stochastic optimization algorithms typically provide bounds that hold only in expectation. This is true for variance-reduced (VR) algorithms like SAG and SAGA as well, and the  proofs of these algorithms in~\citet{SAG, SAG_proof, SAGA} only provide in-expectation guarantees. Unfortunately, such guarantees only capture the ``typical'' behavior of the algorithm, and do not adequately represent rare/tail events. As a result, a series of high-probability bounds for SGD and its variants under different noise models have emerged over the last few years; see~\citet{HPB1, HPB2, HPB3}. Despite this rich literature, high-probability bounds for SAG and SAGA have remained elusive. Theorem~\ref{thm:sc} closes this gap and contributes to a deeper understanding of these celebrated VR algorithms.  

$\bullet$ \textbf{Bounds under Markov Sampling.} SAG and SAGA have been typically analyzed under I.I.D. sampling of component functions. We  show in Theorem~\ref{thm:sc_markov} that our analysis extends seamlessly to more general Markov sampling schemes. For this, we leverage a Bernstein concentration bound for uniformly ergodic Markov chains from~\citet{paulin2015concentration}. 

$\bullet$ \textbf{Improved Bounds for IAG.} The work of~\citet{blatt2007convergent} that originally developed the IAG algorithm provided no explicit convergence rates for general smooth and strongly convex functions. This gap was later addressed by~\citet{gurbuzbalaban2017convergence}, who, for deterministic cyclic sampling patterns, established a convergence rate of $\bigo{\exp(-K/(\kappa^2 N^2))}$ after $K$ iterations of IAG. Here, recall that $\kappa$ is the condition number and $N$ is the number of component functions. Due to the quadratic dependence on $\kappa$ and $N$ in the exponent, the rate predicted for IAG in~\citet{gurbuzbalaban2017convergence} is much slower compared to that for GD, SAG, and SAGA. Such a pessimistic picture for IAG has, in fact,  prompted the development of more complex deterministic variance-reduction algorithms; see~\citet{mokhtari2018surpassing}. As a byproduct of our analysis for SAG and SAGA, we establish a significantly tighter rate of $\bigo{\exp(-K/(\kappa N))}$ for IAG. Intuitively, this rate makes sense, since $N$ iterations of IAG are comparable to one iteration of GD. Thus, our unified analysis provides a more accurate understanding of the true dynamics of IAG. 

As a minor contribution, our results can be extended straightforwardly to the smooth, non-convex setting, as demonstrated in Section~\ref{sec:nonconvex}. Overall, we anticipate that the simple modular nature of our proof framework can be built upon to develop tail bounds for more complex (e.g., accelerated, second-order) variance-reduced algorithms in the future.

\textbf{More Related Work.} The literature on stochastic VR algorithms is vast, and we refer the reader to the excellent survey by~\citet{gower2020variance}. Aside from SAG and SAGA, other popular VR algorithms that enjoy linear convergence rates for smooth, strongly convex functions include SVRG~\cite{SVRG}, SDCA~\cite{SDCA}, S2GD~\cite{S2GD}, MISO~\cite{MISO}, FINITO~\cite{FINITO}, and SARAH~\cite{SARAH}. Notably, different from the high-probability bounds we provide, these papers derive guarantees that hold in expectation. As such, our proof techniques differ from those in these works. For deterministic sampling, incremental gradient (IG) methods~\cite{bertsekas2011incremental} use only one component function to update the parameter in each iteration; like SGD, they suffer from slow sub-linear rates. Deterministic VR methods like IAG~\cite{blatt2007convergent} and DIAG~\cite{mokhtari2018surpassing} use memory to achieve linear rates like GD. To our knowledge, no prior work has unified the analysis of stochastic and deterministic VR methods within a single framework.

\section{Technical Background}\label{sec:prob_formulation}
In this section, we introduce the two celebrated variance-reduction algorithms that we wish to study: SAG and SAGA. To that end, consider first-order iterative algorithms of the general form below for solving Problem~\eqref{problem:main}: 
\begin{equation}
x_{k+1} = x_k - \alpha g_k^\cA,\qquad 0\leq k\leq K-1\label{eqn:general}
\end{equation}
where $\alpha \in (0,1)$ is the step-size, $x_k\in\R^d$ denotes the parameter at iteration $k$, $g_k^\cA$ is an estimate of the full gradient $\nabla f(x_k)$ at iteration $k$, $\cA$ refers to the algorithm under study (e.g., GD, SGD, SAG, SAGA, etc.), and $K>0$ denotes the horizon of the algorithm. Throughout the paper, we will assume that $x_0=0$ for clarity of exposition; our proofs naturally extend to the case where $x_0 \neq 0$ with routine modifications. 
We now discuss some relevant instances of~\eqref{eqn:general}, and their performance  guarantees when each $f_i$ in~\eqref{problem:main} is $L$-smooth, and $f$ is $\mu$-strongly convex. Unless stated otherwise, this will be our running assumption on the functions that appear in~\eqref{problem:main}.

$\bullet$ \textbf{GD}. In the GD algorithm, $g_k^{\textbf{GD}}$ takes the following form:  
\begin{equation}
g_k^{\textbf{GD}}:=\frac{1}{N}\sum_{i=1}^N \nabla f_i(x_k),\label{eqn:GD}
\end{equation}
which is essentially the \textbf{global (full) gradient} $\nabla f(x_k)$. By selecting $\alpha=1/L$, the convergence rate for GD after $K$ iterations is~\cite{bubeck2015convex}: 
\begin{equation}
    f(x_K) - f(x^*) \leq \paren{1-\frac{1}{\kappa}}^K \paren{f(x_0) - f(x^*)},\label{eqn:rate_GD}
\end{equation}
where $\kappa:=L/\mu$ is the condition number. Although GD enjoys exact convergence to the optimum $x^*$ at a \textit{linear} rate, it suffers from a significant computational bottleneck since $N$ gradients need to be computed at each iteration, where $N$ could be prohibitively large in practice.

$\bullet$ \textbf{SGD}. A well-studied alternative is SGD, which updates the parameter using a randomly selected component gradient, i.e., $g_k^\textbf{SGD}:= \nabla f_{i_k}(x_k)$, where $i_k$ is sampled in an \textbf{I.I.D.} manner from $[N]$ \textit{uniformly at random}. While SGD substantially reduces the per-iteration computational cost, the high variance of the update direction prevents convergence to the exact minimizer $x^*$ under a constant step-size. To ensure convergence to $x^*$, a diminishing step-size sequence is then needed, which leads to a slower \textit{sub-linear} rate of $\bigo{1/K}$~\cite{bubeck2015convex, moulines2011non}.

$\bullet$ \textbf{SAG and SAGA}. To reduce the variance of SGD, the SAG algorithm of~\citet{SAG} maintains a memory of previously computed component gradients. At each iteration $k$, SAG selects an index $i_k$ uniformly at random from $[N]$ as in SGD, computes the corresponding component gradient $\nabla f_{i_k}(x_k)$, and updates the iterate $x_k$ as per~\eqref{eqn:general} using the average of all stored gradients, with 
\begin{equation}
g_k^{\textbf{SAG}}:=\frac{1}{N}\sum_{i=1}^N \nabla f_i(x_{\tau_{i,k}})+\frac{\nabla f_{i_k}(x_k)-\nabla f_{i_k}(x_{\tau_{i_k, k}})}{N}.\label{eqn:SAG}
\end{equation}
Here, $\tau_{i,k}<k$ (initialized to $\tau_{i,0}=0$ for all $i$) denotes the most recent iteration \textit{before} iteration $k$ at which the component gradient $\nabla f_i$ was evaluated. If $\nabla f_i$ has not been accessed prior to iteration $k$, then $\nabla f_i(x_{\tau_{i,k}})$ is set to zero.

While SAG also computes only one component gradient per iteration as SGD, it manages to achieve linear convergence to $x^*$ using extra storage, with a rate given by: 
\begin{equation}
    \E\bracket{f(x_K)} - f(x^*) \leq \paren{1-\min\left\{\frac{1}{16\kappa}, \frac{1}{8N}\right\}}^K C_0,\label{eqn:rate_SAG}
\end{equation}
where $C_0$ is a constant depending on initialization~\cite{SAG_proof}. While this was a remarkable result at the time, the convergence proof of SAG in~\citet{SAG_proof} is extremely complicated, and requires computer-aided tools to verify the non-negativity of certain polynomials that arise in their potential function. The difficulty in the analysis has often been attributed to the fact that for SAG, $g_k^\textbf{SAG}$ is not \emph{unbiased}, and so the usual descent argument where the expected gradient error is zero does not apply. 

To address this, the SAGA algorithm~\citep{SAGA} preserves the sampling and memory mechanism of SAG, but proposes a bias-correction technique that yields an unbiased gradient estimator $g_k^{\textbf{SAGA}}$, defined as follows:
\begin{equation}
g_k^{\textbf{SAGA}}:=\frac{1}{N}\sum_{i=1}^N \nabla f_i(x_{\tau_{i,k}})+\nabla f_{i_k}(x_k)-\nabla f_{i_k}(x_{\tau_{i_k, k}}).\label{eqn:SAGA}
\end{equation}
The parameter $x_k$ is then updated similarly following~\eqref{eqn:general}. The only difference of~\eqref{eqn:SAGA} from~\eqref{eqn:SAG} is that the correction term $\nabla f_{i_k}(x_k)-\nabla f_{i_k}(x_{\tau_{i_k,k}})$ is added without the $1/N$ scaling factor, successfully removing the bias of the gradient estimator. This leads to a simpler convergence proof for SAGA relative to SAG. Moreover, the (in-expectation) convergence rate for SAGA in~\citet{SAGA} yields essentially the same bound as in~\eqref{eqn:rate_SAG} for SAG (up to constants). 

\textbf{Paper Outline.} Although the update rules for SAG and SAGA in~\eqref{eqn:SAG} and~\eqref{eqn:SAGA}, respectively, are very similar, and they achieve essentially the same rates, their current analyses are dramatically different. In this context, perhaps surprisingly, we show that a unified proof can be developed to achieve high-probability bounds for both SAG and SAGA. This is the subject of Section~\ref{sec:analysis}. We also show that the simplicity of our proof lends itself to more general settings: we consider non-convex objectives in Section~\ref{sec:nonconvex} and Markov sampling in Section~\ref{sec:Markov}. Finally, in Section~\ref{sec:IAG}, we discuss how the proof technique developed in Section~\ref{sec:analysis} for stochastic variance-reduced algorithms can be applied, \emph{with no modifications at all}, to deterministic counterparts such as IAG. In the process, we significantly improve prior rates for IAG. Overall, our work sheds novel insights into the dynamics of celebrated variance-reduced algorithms that constitute the workhorse of modern machine learning problems. 

\section{Analysis and Results}\label{sec:analysis}
In this section, we develop high-probability bounds for SAG and SAGA for smooth, strongly convex and non-convex objectives. Our proof has two key steps. In the first step (Section~\ref{subsec:Controlling Staleness}), we use Bernstein's inequality to bound a ``gradient-staleness'' effect that arises from sub-sampling. Informed by this bound, we construct a novel Lyapunov function and analyze its behavior in the second step (Section~\ref{subsec:Lyap}). 

\vspace{-2mm}
\subsection{Step 1: Bounding the Staleness from Sub-Sampling}
\vspace{-1mm}
\label{subsec:Controlling Staleness}
We start by noting that the main distinction between the GD gradient $g_k^\textbf{GD}$ in~\eqref{eqn:GD} and the SAG/SAGA gradients $g_k^\textbf{SAG}, g_k^\textbf{SAGA}$ in~\eqref{eqn:SAG} and~\eqref{eqn:SAGA} lies in the staleness of the component gradients in the latter, as captured by $\tau_{i,k}$. Our simple yet key observation is that while the staleness $\tau_{i,k}$ is random, it is not completely uncontrolled. In particular, using concentration, one can show that \textit{the staleness of component gradients can be upper bounded with high probability}. This observation is formalized in the following lemma.

\begin{lemma}[Bounded Staleness]\label{lem:bounded}
For any $\delta\in(0,1)$ and $\tau\geq (8N/3)\log (NK/\delta)$, with probability at least $1-\delta$,
\begin{equation}
    k-\tau_{i,k}\leq \tau, \qquad \forall i\in[N], \ 0\leq k\leq K-1.
\end{equation}
\end{lemma}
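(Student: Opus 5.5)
The plan is to reduce the event in Lemma~\ref{lem:bounded} to a union of ``no-visit'' events and control each one by Bernstein's inequality, which yields the stated $8N/3$ constant; a direct geometric bound would also work. Since $i_0,\dots,i_{K-1}$ are I.I.D. uniform on $[N]$, the event $k-\tau_{i,k}>\tau$ means that component $i$ was not sampled in the window of iterations $\{k-\tau,\dots,k-1\}$. This reading uses the convention that $\tau_{i,k}$ is the last sample time before $k$. For $k\le\tau$ the inequality $k-\tau_{i,k}\le k\le\tau$ holds trivially, because $\tau_{i,k}\ge 0$ by the initialization $\tau_{i,0}=0$. So only pairs $(i,k)$ with $k>\tau$ matter, and for each such pair we need to bound the probability that none of the $\tau$ consecutive samples in the window equals $i$. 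If $\tau$ is not an integer, we replace it by $\lceil\tau\rceil$; this is harmless.

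Fix such a pair $(i,k)$ and let $X_j=\mathbf{1}\{i_j=i\}$ for $j$ in the window. These are I.I.D. Bernoulli$(1/N)$ variables, and $S=\sum_j X_j$ has mean $\tau/N$. The bad event is $S=0$, that is, $S-\E[S]\le -\tau/N$. We apply the one-sided Bernstein inequality to the centered variables $Y_j=1/N-X_j$. These satisfy $Y_j\le 1/N\le 1$, and their variance is $\frac1N(1-\frac1N)\le 1/N$. Bernstein then gives
\begin{equation*}
\P(S=0)\le \exp\paren{-\frac{(\tau/N)^2/2}{\tau/N+\tfrac13\cdot\tfrac{\tau}{N}}}=\exp\paren{-\frac{3\tau}{8N}}.
\end{equation*}
This matches the constant in the lemma. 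Alternatively, one can use $(1-1/N)^\tau\le e^{-\tau/N}$ directly, which is even stronger.

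Next we take a union bound over the at most $NK$ pairs $(i,k)$ with $i\in[N]$ and $0\le k\le K-1$. The total failure probability is at most $NK\exp(-3\tau/(8N))$. The assumption $\tau\ge (8N/3)\log(NK/\delta)$ makes this at most $\delta$, which proves the claim.

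The only delicate point is bookkeeping, not probability. We need to check the convention for $\tau_{i,k}$, in particular whether index $i_k$ itself counts, since $\tau_{i,k}<k$ refers to evaluations strictly before $k$. We also need to handle components that have never been sampled, where $\tau_{i,k}=0$. In that case $k-\tau_{i,k}=k$, and if $k>\tau$ this forces no visit in the window, so it is already covered by the event bounded above. Once the window is correctly identified as $\tau$ consecutive I.I.D. draws, the argument is a routine concentration step plus a union bound.
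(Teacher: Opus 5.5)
Your proposal is correct and follows the paper's proof essentially verbatim: Bernstein's inequality on the centered indicators $\mathbb{I}\{i_j=i\}-1/N$ over a window of $\tau$ draws gives the per-pair bound $\exp(-3\tau/(8N))$, and a union bound over the $NK$ pairs $(i,k)$ finishes. One small correction to your side remark on rounding: since $k-\tau_{i,k}$ is an integer, for non-integer $\tau$ the claimed event is $\{k-\tau_{i,k}\le\lfloor\tau\rfloor\}$, so the relevant window has length $\lfloor\tau\rfloor$ and rounding up to $\lceil\tau\rceil$ proves a weaker statement; the paper avoids this by fixing an integer $\tau$ (it later uses $\tau=\lceil(8N/3)\log(NK/\delta)\rceil$), and your geometric bound $(1-1/N)^{\lfloor\tau\rfloor}\le e^{-\lfloor\tau\rfloor/N}$ has enough slack to cover the non-integer case anyway.
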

\begin{proof}
To prove Lemma~\ref{lem:bounded}, it suffices to first show that for any fixed component $i\in[N]$ and iteration $k=k_0$, the event that $i$ is not sampled within a window of $\tau$ consecutive iterations starting from $k_0$ occurs with probability at most $\delta/(NK)$. We then union-bound over all components and iterations. To this end, we will appeal to Bernstein's inequality, which we record below~\cite{boucheron2003concentration}.  

\textbf{Bernstein's Inequality.} \textit{Let $X_1,\cdots, X_k$ be independent zero-mean random variables (RVs). Suppose that $|X_i|\leq M, \forall i\in[k]$. Then, for all $t>0$, we have}
\begin{equation}
    \P\paren{\sum_{i=1}^k X_i \leq -t}\le \exp\paren{-\frac{\frac{1}{2}t^2}{\sum_{i=1}^k \E\bracket{X_i^2}+\frac{1}{3}Mt}}.\label{eqn:bernstein}
\end{equation}
With this tool at hand, let us fix a component $i\in[N]$, an iteration $k=k_0$, and define a random variable $Y_{i,k}: = \mathbb{I}\{i_k=i\}\in\{0, 1\}$, where $\mathbb{I}$ is the indicator RV. Fix an integer $\tau>0$ and note that \textit{within any window of $\tau$ iterations} starting from $k=k_0$, the probability that component $i$ is never sampled is given by $\P\paren{\sum_{k=k_0}^{\tau+k_0-1} Y_{i,k}\leq 0}$. 

The next thing to do is control this probability using Bernstein's inequality. Under the I.I.D. sampling model, we have $\E\bracket{Y_{i,k}}=p:=1/N$. Defining  $X_{i,k}:=Y_{i,k}-p$, let us create a sequence $\{X_{i,k}\}$ of zero-mean, bounded, and independent RVs with $\E[X_{i,k}]=0$ and $|X_{i,k}|\leq M:=1$, for all $i\in[N]$ and $k\geq 0$. Furthermore, let us note that  $\E[X_{i,k}^2]=\V[Y_{i,k}]=p(1-p)\leq p$. Using~\eqref{eqn:bernstein}, we have 

\begin{equation}
\begin{aligned}
    \P\paren{\sum_{k=k_0}^{\tau+k_0-1} Y_{i,k}\leq 0}\overset{(a)}{=}\P\paren{\sum_{k=k_0}^{\tau+k_0-1} X_{i,k}\leq -\tau p}\\
    \overset{(b)}{\leq} \exp\paren{-\frac{\frac{1}{2}\tau^2p^2}{\tau p+\frac{1}{3}\tau p}}
    \overset{(c)}{=}\exp\paren{-\frac{3\tau}{8N}},\label{eqn:lem_final}
\end{aligned}
\end{equation}
where in $(a)$ we use the definition of $X_{i,k}$, in $(b)$ we use~\eqref{eqn:bernstein} and the fact that $M=1$, $\E[X_{i,k}^2]\leq p$, and in $(c)$, we use $p=1/N$. 
Requiring the right-hand-side (RHS) of~\eqref{eqn:lem_final} to be smaller than a prescribed failure probability $\delta/(NK)$, and union-bounding over all $N$ components and $K$ iterations yields the desired claim of the lemma.
\end{proof}

Informed by Lemma~\ref{lem:bounded}, we set $\tau= \ceil{(8N/3)\log (NK/\delta)}$, and define a ``good event'' $\cG$ as follows:
\begin{equation}
\boxed{
    \cG:=\{k-\tau_{i,k}\leq \tau, \ \forall i\in[N], \ \forall k\geq 0\}.}\label{eqn:event}
\end{equation}
From Lemma~\ref{lem:bounded}, we know that $\cG$ has measure at least $1-\delta$. Moreover, on event $\cG$, the gradient estimators 
for SAG and SAGA use information at most $\tau$ time-steps old, allowing us to treat SAG and SAGA as methods with \textit{uniformly bounded delay}. We will build on this insight for our subsequent analysis, where we will condition on the event $\cG$.

\subsection{Bounding the Gradient Error}\label{sec:bounding_error}
First, let us define $e_k:=g_k-\nabla f(x_k)$ as the error in the SAG/SAGA gradient estimator relative to the full gradient, where $g_k$ is either $g_k^\textbf{SAG}$ or $g_k^\textbf{SAGA}$ (we drop the superscript on $g_k^\cA$ when it applies to both SAG and SAGA). Defining $r_k:=f(x_k)-f(x^*)$ as the function sub-optimality gap, we have the following approximate descent lemma that captures the one-step descent in the function gap $r_k$; the result applies to both SAG and SAGA.
\begin{lemma}[Approximate Descent]\label{lem:approx_descent}
The following holds for both SAG and SAGA by selecting $\alpha\leq 1/(4L)$:
\begin{equation}
    r_{k+1}\leq r_k - \frac{\alpha}{4}\norm{\nabla f(x_k)}^2+\alpha \norm{e_k}^2, \qquad \forall k\geq 0.\label{eqn:onestep}
\end{equation}
\end{lemma}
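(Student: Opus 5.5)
The plan is to apply the standard descent lemma for $L$-smooth functions to $f$ along the update~\eqref{eqn:general}, and then split the gradient estimator as $g_k=\nabla f(x_k)+e_k$. Since each $f_i$ is $L$-smooth, $f$ is $L$-smooth, so
\begin{equation*}
f(x_{k+1})\leq f(x_k)-\alpha\iprod{\nabla f(x_k)}{g_k}+\frac{L\alpha^2}{2}\norm{g_k}^2 .
\end{equation*}
Subtracting $f(x^*)$ from both sides turns the left-hand side into $r_{k+1}$ and the first term on the right into $r_k$. The argument is purely deterministic and pathwise. It uses no unbiasedness of $g_k$ and does not depend on the specific form of $e_k$. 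This is why it covers both SAG and SAGA (and IAG) at once, and it does not need the event $\cG$.

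Next, substitute $g_k=\nabla f(x_k)+e_k$. The inner-product term becomes $-\alpha\norm{\nabla f(x_k)}^2-\alpha\iprod{\nabla f(x_k)}{e_k}$. Bound the cross term with Young's inequality:
\begin{equation*}
-\alpha\iprod{\nabla f(x_k)}{e_k}\leq \frac{\alpha}{2}\norm{\nabla f(x_k)}^2+\frac{\alpha}{2}\norm{e_k}^2 .
\end{equation*}
For the quadratic term, use $\norm{g_k}^2\leq 2\norm{\nabla f(x_k)}^2+2\norm{e_k}^2$, which gives $\frac{L\alpha^2}{2}\norm{g_k}^2\leq L\alpha^2\norm{\nabla f(x_k)}^2+L\alpha^2\norm{e_k}^2$.

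Collecting terms, the coefficient of $\norm{\nabla f(x_k)}^2$ is $-\alpha+\frac{\alpha}{2}+L\alpha^2=-\frac{\alpha}{2}+L\alpha^2$. The coefficient of $\norm{e_k}^2$ is $\frac{\alpha}{2}+L\alpha^2$. With $\alpha\leq 1/(4L)$ we have $L\alpha^2\leq \alpha/4$, so the first coefficient is at most $-\alpha/4$ and the second at most $3\alpha/4\leq\alpha$. This yields~\eqref{eqn:onestep}.

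There is no real obstacle here; the only care needed is choosing the Young's-inequality weights so that the constants $\alpha/4$ and $\alpha$ come out exactly under the step-size condition $\alpha\leq 1/(4L)$, and the split above does this with some slack.
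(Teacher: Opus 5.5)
Your proof is correct and matches the paper's argument step for step: the descent lemma for $L$-smooth $f$, the split $g_k=\nabla f(x_k)+e_k$, Young's inequality on the cross term, and $\norm{a+b}^2\leq 2\norm{a}^2+2\norm{b}^2$ on the quadratic term. As in the paper, this gives the coefficients $-(\alpha/2-L\alpha^2)$ and $\alpha/2+L\alpha^2$, which $\alpha\leq 1/(4L)$ then controls. Your remark that the argument is pathwise and needs neither unbiasedness nor the event $\cG$ is accurate, and it is consistent with the paper stating the lemma for all $k\geq 0$.
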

\begin{proof}
Using the smoothness of $f$, we have: 
\begin{equation}
\begin{aligned}
    &r_{k+1}\leq r_k+\iprod{\nabla f(x_k)}{x_{k+1}-x_k} + \frac{L}{2}\norm{x_{k+1}-x_k}^2\\
    &\overset{(a)}{\leq} r_k-\alpha\iprod{\nabla f(x_k)}{g_k}+\frac{L\alpha^2}{2}\norm{g_k}^2\\
    &\overset{(b)}{=} r_k-\alpha\iprod{\nabla f(x_k)}{\nabla f(x_k)+e_k}+\frac{L\alpha^2}{2}\norm{\nabla f(x_k)+e_k}^2\\
    & \leq r_k - \alpha \norm{\nabla f(x_k)}^2 - \alpha\iprod{\nabla f(x_k)}{e_k}\\
    &\quad +L\alpha^2\paren{\norm{\nabla f(x_k)}^2+\norm{e_k}^2}\\
    &\overset{(c)}\leq r_k - \paren{\frac{\alpha}{2}-\alpha^2L}\norm{\nabla f(x_k)}^2 +\paren{\frac{\alpha}{2}+\alpha^2L}\norm{e_k}^2.\label{eqn:onestep_proof}
\end{aligned}
\nonumber
\end{equation}
Here, $(a)$ uses the update formula in~\eqref{eqn:general}, $(b)$ uses the definition of $e_k$, and $(c)$ uses Young's inequality for the inner product term. The claim then follows from $\alpha\leq 1/(4L)$.
\end{proof}

Lemma~\ref{lem:approx_descent} shows that the one-step descent is perturbed by the gradient error term $\norm{e_k}^2$. The following lemma is then motivated by this issue, which provides an upper bound on $\norm{e_k}^2$ that depends on \textit{a window of past gradients}.
\begin{lemma}[Gradient Error]\label{lem:e_k} On event $\cG$, the gradient error $e_k$ satisfies the following for 
both SAG and SAGA:
\begin{equation}
    \norm{e_k}^2\leq 4L^2\tau\alpha^2  U_k,\ \  \forall k\geq \tau,\ \  \textup{with } U_k:=\sum_{j=1}^\tau \norm{g_{k-j}}^2.
\nonumber
\end{equation}
\end{lemma}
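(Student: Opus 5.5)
We write $e_k$ explicitly as a weighted sum of component-gradient differences between stale and current iterates, bound each difference by $L$-smoothness and the bounded delay on $\cG$, and finally telescope the iterate differences into step-directions $g_{k-j}$.

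\textbf{Step 1 (error decomposition).} From~\eqref{eqn:SAG} and~\eqref{eqn:SAGA} we have
\[
e_k=\frac{1}{N}\sum_{i=1}^N\paren{\nabla f_i(x_{\tau_{i,k}})-\nabla f_i(x_k)}+c\paren{\nabla f_{i_k}(x_k)-\nabla f_{i_k}(x_{\tau_{i_k,k}})},
\]
with $c=1/N$ for SAG and $c=1$ for SAGA. For SAG the correction simply cancels the $i_k$ term, so $e_k=\frac1N\sum_{i\neq i_k}(\cdot)$. For SAGA the second term has weight $1$, so we use $\norm{a+b}^2\le 2\norm{a}^2+2\norm{b}^2$. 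In both cases
\[
\norm{e_k}^2\le 2\max_{i}\norm{\nabla f_i(x_{\tau_{i,k}})-\nabla f_i(x_k)}^2\cdot(\text{const}\le 2),
\]
where the average is bounded by Jensen's inequality (convexity of $\norm{\cdot}^2$). This gives $\norm{e_k}^2\le 4\max_i\norm{\nabla f_i(x_{\tau_{i,k}})-\nabla f_i(x_k)}^2$, which is the source of the factor $4$.

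We need $k\ge\tau$ so that on $\cG$ every component has been sampled at some $\tau_{i,k}\ge k-\tau\ge 0$. Otherwise the ``zero'' initialization of unseen gradients would break the smoothness bound.

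\textbf{Step 2 (smoothness and telescoping).} By $L$-smoothness,
\[
\norm{\nabla f_i(x_{\tau_{i,k}})-\nabla f_i(x_k)}\le L\norm{x_k-x_{\tau_{i,k}}}.
\]
By~\eqref{eqn:general},
\[
x_k-x_{\tau_{i,k}}=-\alpha\sum_{j=1}^{k-\tau_{i,k}} g_{k-j}.
\]
On $\cG$ we have $k-\tau_{i,k}\le\tau$, so Cauchy--Schwarz gives
\[
\norm{x_k-x_{\tau_{i,k}}}^2\le \alpha^2 (k-\tau_{i,k})\sum_{j=1}^{k-\tau_{i,k}}\norm{g_{k-j}}^2\le \alpha^2\tau U_k.
\]
Combining with Step 1 yields $\norm{e_k}^2\le 4L^2\tau\alpha^2U_k$.

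The main delicacy is Step 1 for SAGA, where the correction term has weight one rather than $1/N$. It must be handled by a Young split so that the constant stays at $4$ uniformly across both algorithms. We must also check that the $1/N$ normalization of the average is not lost; Jensen's inequality on the average ensures this.
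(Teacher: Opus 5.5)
Your proof is correct and follows essentially the same route as the paper's. You decompose $e_k$ into stale-versus-current component-gradient differences, apply $L$-smoothness, use the bounded delay on $\cG$ to write $x_k - x_{\tau_{i,k}}$ as at most $\tau$ steps $-\alpha g_{k-j}$, and finish with Cauchy--Schwarz/Jensen. The only difference is that you square early (a Young split for SAGA, Jensen on the average, then a per-component Cauchy--Schwarz), whereas the paper applies the triangle inequality to unsquared norms to get $\norm{e_k}\le 2L\alpha\sum_{j=1}^\tau\norm{g_{k-j}}$ and squares once at the end. Both orderings give the constant $4$ for SAGA, and the SAG case even gives constant $1$.
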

\begin{proof}
We first prove the result for SAG. Following the definition of $e_k$ and $g_k^\textbf{SAG}$ in~\eqref{eqn:SAG}, we have
\begin{equation}
\begin{aligned}
    \norm{e_k}&=\frac{1}{N}\norm{\sum_{i\neq i_k}\nabla f_i(x_{\tau_{i,k}})+\nabla f_{i_k}(x_k)-\sum_{i\in[N]}\nabla f_i(x_k)}\\
    &=\frac{1}{N}\norm{\sum_{i\neq i_k}\paren{\nabla f_i(x_{\tau_{i,k}})-\nabla f_i(x_k)}}\\
    &\overset{(a)}{\leq} \frac{1}{N}\sum_{i\neq i_k}\norm{\nabla f_i(x_{\tau_{i,k}})-\nabla f_i(x_k)}\\
    &\overset{(b)}{\leq} \frac{1}{N}\sum_{i\neq i_k}L\norm{x_{\tau_{i,k}}-x_k}\\
    &\overset{(c)}{\leq} \frac{L}{N}\sum_{i\neq i_k}\sum_{j=1}^{\tau}\norm{x_{k-j+1}-x_{k-j}}\\
    &\overset{(d)}{\leq} L\alpha\sum_{j=1}^{\tau}\norm{g_{k-j}^\textbf{SAG}},\label{eqn:lem_ek_final}
\end{aligned}
\end{equation}
where $(a)$ is due to the triangle inequality, $(b)$ uses the smoothness of $f_i$, $(c)$ uses the triangle inequality and the fact that delays are at most $\tau$ conditioned on $\cG$ (from Lemma~\ref{lem:bounded}), and $(d)$ follows from~\eqref{eqn:general}. Squaring both sides of~\eqref{eqn:lem_ek_final} and using Jensen's inequality yields the desired claim for SAG. 

Similarly, for SAGA, we have 
\begin{equation}
\begin{aligned}
    \norm{e_k}&\leq \norm{\frac{1}{N}\sum_{i=1}^N\paren{\nabla f_i(x_{\tau_{i,k}})-\nabla f_i(x_k)}}\\
    &\quad +\norm{\nabla f_{i_k}(x_k)-\nabla f_{i_k}(x_{\tau_{{i_k},k}})}\\
    &\leq 2L\alpha \sum_{j=1}^\tau\norm{g_{k-j}^\textbf{SAGA}},
\end{aligned}
\end{equation}
where we omit the intermediate steps since they follow exactly as in the SAG analysis in~\eqref{eqn:lem_ek_final}.
\end{proof}
Note that Lemma~\ref{lem:e_k} holds only for $k\geq\tau$ such that the past gradient terms in $U_k$ are well defined. The next corollary is an immediate consequence of Lemma~\ref{lem:e_k} that follows from the definition of $e_k$ and Jensen's inequality. 
\begin{corollary}[Gradient Bound]\label{coro}
On event $\cG$, the following holds for both SAG and SAGA, for all $k\geq \tau$:
\begin{equation}
    \norm{g_k}^2\leq 2\norm{\nabla f(x_k)}^2 + 8L^2\tau\alpha^2 U_k.
\end{equation}
\end{corollary}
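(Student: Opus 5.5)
The plan is to write $g_k$ as the full gradient plus an error term and then apply the bound from Lemma~\ref{lem:e_k}. By the definition $e_k := g_k - \nabla f(x_k)$, we have $g_k = \nabla f(x_k) + e_k$ for both SAG and SAGA. We then use the elementary inequality $\norm{a+b}^2 \le 2\norm{a}^2 + 2\norm{b}^2$, which is Jensen's inequality for the convex map $\norm{\cdot}^2$ applied to the midpoint $\tfrac{1}{2}(2a) + \tfrac{1}{2}(2b)$. This gives
\begin{equation}
\norm{g_k}^2 \le 2\norm{\nabla f(x_k)}^2 + 2\norm{e_k}^2 .
\nonumber
\end{equation}

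Next, we restrict attention to the event $\cG$ and to indices $k \ge \tau$. There Lemma~\ref{lem:e_k} applies and gives $\norm{e_k}^2 \le 4L^2\tau\alpha^2 U_k$, with $U_k = \sum_{j=1}^\tau \norm{g_{k-j}}^2$. Substituting this into the display above yields
\begin{equation}
\norm{g_k}^2 \le 2\norm{\nabla f(x_k)}^2 + 8L^2\tau\alpha^2 U_k ,
\nonumber
\end{equation}
which is exactly the claimed bound. The same argument covers SAG and SAGA, since Lemma~\ref{lem:e_k} is stated with the same constant for both.

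There is essentially no obstacle in this argument. The only care needed is to apply Lemma~\ref{lem:e_k} in its range of validity: on $\cG$, and for $k \ge \tau$, so that $U_k$ involves only well-defined past gradients. This is precisely why the corollary is stated under the same restrictions.
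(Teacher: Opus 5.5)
Your proof is correct and matches the paper's argument. You write $g_k=\nabla f(x_k)+e_k$, apply $\norm{a+b}^2\le 2\norm{a}^2+2\norm{b}^2$ (Jensen), and substitute $\norm{e_k}^2\le 4L^2\tau\alpha^2U_k$ from Lemma~\ref{lem:e_k}, which holds on $\cG$ for $k\ge\tau$, to get the constant $8L^2\tau\alpha^2$ for both SAG and SAGA.
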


\subsection{Step 2: Designing the Lyapunov Function}
\label{subsec:Lyap}
From Lemma~\ref{lem:e_k}, observe that the bound on $\norm{e_k}^2$ depends on a window of past gradients, and hence, cannot be absorbed by a single-step descent argument as in~\eqref{eqn:onestep}. This motivates the choice of a Lyapunov function with a \textit{shifted window} term that tracks the recent history. To that end, we construct the following Lyapunov function $V_k$ for $k\geq \tau$: 
\begin{equation}
\label{eqn:Lyap}
\begin{aligned}
    V_k&:=f(x_k)-f(x^*)+L\alpha^2W_k,\\
    \textup{with } W_k&:=\sum_{j=1}^\tau (\tau-j+1)\norm{g_{k-j}}^2.
\end{aligned}
\end{equation}
The weight $\tau-j+1$ assigned to the past gradient $\norm{g_{k-j}}^2$ is motivated by {how past gradients accumulate} in the one-step descent analysis. Specifically, for a fixed $k$, from the descent inequality in Lemma~\ref{lem:approx_descent}, and the bound on $\norm{e_k}^2$ from Lemma~\ref{lem:e_k}, we note that the stale gradient $\norm{g_{k-j}}^2$ appears in exactly $\tau-j+1$ future descent inequalities over the window $[k, k+\tau-1]$. By assigning larger weights to more recent gradients, our choice of the Lyapunov function in~\eqref{eqn:Lyap} carefully accounts for this multiplicity. In a moment, we will see how this choice allows us to establish a one-step contractive recursion for $V_k$. To proceed, we will need the following facts about the ``shifted window'' terms $U_k$ and $W_k$ associated with the Lyapunov function.
\begin{fact}\label{fact1}
The following holds for any $k\geq \tau$:
\begin{equation}
    W_{k+1} = W_k - U_k +\tau\norm{g_k}^2. \hspace{2mm} 
\end{equation}
\end{fact}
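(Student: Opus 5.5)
This is a direct index-shifting computation from the definitions of $W_k$ and $U_k$ in~\eqref{eqn:Lyap} and Lemma~\ref{lem:e_k}. I will write $W_{k+1}$ explicitly and reindex it, so that it is a sum over the same past gradients $\norm{g_{k-j}}^2$ that appear in $W_k$. Then I compare the coefficients term by term.

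The first step is the reindexing. By definition, $W_{k+1}=\sum_{j=1}^{\tau}(\tau-j+1)\norm{g_{k+1-j}}^2$. Substituting $j' = j-1$ gives
\begin{equation}
W_{k+1}=\sum_{j'=0}^{\tau-1}(\tau-j')\norm{g_{k-j'}}^2 = \tau\norm{g_k}^2+\sum_{j=1}^{\tau-1}(\tau-j)\norm{g_{k-j}}^2,
\nonumber
\end{equation}
where the $j'=0$ term has been separated out.

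The second step is to rewrite $W_k-U_k$ in the same form. Combining the two sums term by term gives
\begin{equation}
W_k-U_k=\sum_{j=1}^{\tau}\big[(\tau-j+1)-1\big]\norm{g_{k-j}}^2=\sum_{j=1}^{\tau}(\tau-j)\norm{g_{k-j}}^2.
\nonumber
\end{equation}
The $j=\tau$ term has coefficient zero and therefore drops out. This leaves exactly $\sum_{j=1}^{\tau-1}(\tau-j)\norm{g_{k-j}}^2$. Adding $\tau\norm{g_k}^2$ reproduces the expression for $W_{k+1}$ from the first step, which proves the identity.

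No step is a genuine obstacle. The only points to check are the following.
\begin{itemize}
\item The restriction $k\geq\tau$ ensures that every index $k-j\geq 0$ for $1\leq j\leq\tau$, so all quantities are well defined. The same then holds for $k+1$.
\item The boundary terms are matched correctly: the oldest gradient $\norm{g_{k-\tau}}^2$ has weight $1$ in $W_k$ and leaves the window through the subtraction of $U_k$, and the newest gradient $\norm{g_k}^2$ enters with weight $\tau$.
\end{itemize}
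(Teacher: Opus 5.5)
Your proof is correct and follows the paper's approach: the paper omits this proof, saying it follows directly from the definitions of $U_k$ and $W_k$. Your argument is exactly that computation. You reindex $W_{k+1}$ to peel off the $\tau\norm{g_k}^2$ term, and you observe that subtracting $U_k$ lowers every weight in $W_k$ by one, which eliminates the $j=\tau$ term.
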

\begin{fact}\label{fact2}
The following holds for any $k\geq \tau$:
\begin{equation}
    W_k\leq \tau U_k.
\end{equation}
\end{fact}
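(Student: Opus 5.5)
The statement to prove is Fact~\ref{fact2}, $W_k\leq \tau U_k$. The plan is a direct termwise comparison between the two weighted sums, using nothing beyond the definitions of $U_k$ in Lemma~\ref{lem:e_k} and $W_k$ in~\eqref{eqn:Lyap}.

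Both quantities are sums over the same window $j=1,\dots,\tau$ of the nonnegative terms $\norm{g_{k-j}}^2$. These terms are well defined because $k\geq\tau$, so every index $k-j$ lies between $0$ and $k-1$. In $W_k$ the term $\norm{g_{k-j}}^2$ carries the weight $\tau-j+1$, while in $\tau U_k$ it carries the weight $\tau$. For $j\geq 1$ we have $\tau-j+1\leq \tau$, and each term is nonnegative. Therefore each summand of $W_k$ is at most the corresponding summand of $\tau U_k$. Summing over $j$ gives
\begin{equation*}
W_k=\sum_{j=1}^\tau(\tau-j+1)\norm{g_{k-j}}^2\leq \tau\sum_{j=1}^\tau\norm{g_{k-j}}^2=\tau U_k .
\end{equation*}

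There is no real obstacle. The only things to check are that the weights are bounded by $\tau$ (equality holds only at $j=1$) and that the terms are nonnegative so the termwise bound survives summation. If one wants a sharper form, one can note that $W_k\leq \tau U_k - \sum_{j=2}^{\tau}(j-1)\norm{g_{k-j}}^2$, but the stated crude bound is all that is needed later when $W_k$ is absorbed into the contraction of $V_k$ alongside Fact~\ref{fact1}.
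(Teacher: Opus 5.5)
Your proposal is correct and matches the paper, which omits the proof as following directly from the definitions of $U_k$ and $W_k$. Your termwise comparison $\tau-j+1\leq\tau$ on nonnegative summands is exactly that omitted argument.
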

The proofs of these facts follow directly from the definitions of $U_k$ and $W_k$, and are hence omitted. We now have all the pieces needed to establish a one-step recursion for $V_k$. 


\begin{lemma} [One-Step Recursion]
\label{lem:one-step}
Suppose $f$ in~\eqref{problem:main} is $\mu$-strongly convex. Let $\alpha= 1/(16 L \tau)$. Then, on event $\cG$, the following is true for both SAG and SAGA:
\begin{equation}
V_{k+1} \leq \paren{1-\frac{\alpha\mu}{4}}V_k,\quad \forall k \geq \tau.
\label{eqn:one-step-Lyap}
\end{equation}
\end{lemma}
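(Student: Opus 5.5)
We propose to expand $V_{k+1}=r_{k+1}+L\alpha^2W_{k+1}$ and bound each piece using the earlier results, all on event $\cG$ and for $k\geq\tau$. Lemma~\ref{lem:approx_descent} applies because $\alpha=1/(16L\tau)\leq 1/(4L)$, and it gives $r_{k+1}\leq r_k-\frac{\alpha}{4}\norm{\nabla f(x_k)}^2+\alpha\norm{e_k}^2$. Fact~\ref{fact1} gives $W_{k+1}=W_k-U_k+\tau\norm{g_k}^2$. We then substitute Lemma~\ref{lem:e_k} for $\norm{e_k}^2$ and Corollary~\ref{coro} for $\norm{g_k}^2$. This yields
\begin{equation*}
V_{k+1}\leq r_k+L\alpha^2W_k-\Big(\frac{\alpha}{4}-2L\alpha^2\tau\Big)\norm{\nabla f(x_k)}^2-\Big(L\alpha^2-4L^2\tau\alpha^3-8L^3\tau^2\alpha^4\Big)U_k .
\end{equation*}

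Next we insert $\alpha=1/(16L\tau)$. Then $2L\alpha^2\tau=\alpha/8$, so the gradient coefficient is at least $\alpha/8$. Also $4L^2\tau\alpha^3=L\alpha^2/4$ and $8L^3\tau^2\alpha^4=L\alpha^2/32$, so the $U_k$ coefficient is at least $\frac{L\alpha^2}{2}$. Hence
\begin{equation*}
V_{k+1}\leq r_k+L\alpha^2W_k-\frac{\alpha}{8}\norm{\nabla f(x_k)}^2-\frac{L\alpha^2}{2}U_k .
\end{equation*}

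To get contraction we treat the two parts of $V_k$ separately. For the function gap, strong convexity gives the PL inequality $\norm{\nabla f(x_k)}^2\geq 2\mu r_k$, so $-\frac{\alpha}{8}\norm{\nabla f(x_k)}^2\leq-\frac{\alpha\mu}{4}r_k$. For the window term, Fact~\ref{fact2} gives $U_k\geq W_k/\tau$, so $-\frac{L\alpha^2}{2}U_k\leq -\frac{1}{2\tau}L\alpha^2W_k$. It then suffices that $\frac{1}{2\tau}\geq\frac{\alpha\mu}{4}$, i.e.\ $\alpha\mu\leq 2/\tau$. This holds because $\alpha\mu=\mu/(16L\tau)\leq 1/(16\tau)$. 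Combining the two parts gives $V_{k+1}\leq(1-\alpha\mu/4)V_k$.

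The main obstacle is the constant bookkeeping in the $U_k$ coefficient. The $U_k$ term appears with a positive sign twice: once through $\alpha\norm{e_k}^2$, and once through $\tau\norm{g_k}^2$ after Corollary~\ref{coro} is applied. Both must be dominated by the $-L\alpha^2U_k$ coming from the window shift in Fact~\ref{fact1}. The choice $\alpha\propto 1/(L\tau)$ is exactly what makes this work, and I would check that step carefully. The gradient coefficient involves the same kind of check, since the $2\tau\norm{\nabla f}^2$ term from Corollary~\ref{coro} must stay below $\alpha/4$.
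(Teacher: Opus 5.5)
Your proposal is correct and matches the paper's proof essentially step for step. You plug Lemma~\ref{lem:e_k} into Lemma~\ref{lem:approx_descent}, expand $W_{k+1}$ via Fact~\ref{fact1}, bound $\norm{g_k}^2$ by Corollary~\ref{coro}, collect coefficients at $\alpha=1/(16L\tau)$ to get $-\frac{\alpha}{8}\norm{\nabla f(x_k)}^2$ and $-\frac{L\alpha^2}{2}U_k$, and close with gradient domination and Fact~\ref{fact2} under $\alpha\mu\le 2/\tau$, which is exactly the paper's route with the same constant checks.
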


\begin{proof}
Using~\eqref{eqn:Lyap}, plugging the bound on $\norm{e_k}^2$ in Lemma~\ref{lem:e_k} into~\eqref{eqn:onestep}, and adding $L\alpha^2W_{k+1}$ to both sides of~\eqref{eqn:onestep} yields the following recursion that holds for both SAG and SAGA:
\begin{equation}
\begin{aligned}
    V_{k+1}&\leq r_k - \frac{\alpha}{4}\norm{\nabla f(x_k)}^2+ 4L^2\tau\alpha^3U_k + L\alpha^2 W_{k+1}\\
    &\overset{(a)}{=}r_k - \frac{\alpha}{4}\norm{\nabla f(x_k)}^2+ 4L^2\tau\alpha^3U_k\\
    &\quad + L\alpha^2 \paren{W_k - U_k +\tau\norm{g_k}^2}\\
    &\overset{(b)}{\leq} r_k - \frac{\alpha}{4}\norm{\nabla f(x_k)}^2+ 4L^2\tau\alpha^3U_k\\
    &\ + L\alpha^2 \paren{W_k - U_k +2\tau\norm{\nabla f(x_k)}^2+8L^2\tau^2\alpha^2U_k}\\
    &=r_k-\paren{\frac{\alpha}{4}-2L\tau\alpha^2}\norm{\nabla f(x_k)}^2 + L\alpha^2 W_k\\
    &\quad + \paren{4L^2\tau\alpha^3+8L^3\tau^2\alpha^4 - L\alpha^2}U_k\\
    &\overset{(c)}{\leq} r_k - \frac{\alpha}{8}\norm{\nabla f(x_k)}^2 + L\alpha^2 \paren{W_k - \frac{1}{2}U_k}\\
    &\overset{(d)}{\leq} \paren{1-\frac{\alpha\mu}{4}}r_k + L\alpha^2\paren{1-\frac{1}{2\tau}} W_k\\
    &\overset{(e)}{\leq} \paren{1-\frac{\alpha\mu}{4}}r_k + L\alpha^2\paren{1-\frac{\alpha\mu}{4}} W_k\\
    &=\paren{1-\frac{\alpha\mu}{4}}V_k.\label{eqn:onestep_lyapunov}
\end{aligned}
\end{equation}
Here, $(a)$ follows from decomposing $W_{k+1}$ using Fact~\ref{fact1}, $(b)$ uses Corollary~\ref{coro} to bound $\norm{g_k}^2$, $(c)$ holds by selecting $\alpha= 1/(16L\tau)$ such that $2L\tau\alpha^2\leq \alpha/8$, $4L^2\tau\alpha^3\leq L\alpha^2/4$, and $8L^3\tau^2\alpha^4\leq L\alpha^2/4$, $(d)$ uses Fact~\ref{fact2} to bound $U_k$ and the gradient domination property of strong convexity, and $(e)$ holds by selecting $\alpha= 1/(16L\tau)\leq 2/(\mu\tau)$.
\end{proof}

There is an important caveat here after obtaining inequality~\eqref{eqn:one-step-Lyap}. Since the bound on $\norm{e_k}^2$ in Lemma~\ref{lem:e_k} only holds for $k\geq\tau$, the one-step Lyapunov recursion in Lemma~\ref{lem:one-step}  therefore only makes sense for $k\geq \tau$. As such,  iterating~\eqref{eqn:one-step-Lyap} from $k=\tau$ to $k=K-1$ yields: 
\begin{equation}
    f(x_K)-f(x^*)\overset{(*)}\leq V_K \leq \paren{1-\frac{\alpha\mu}{4}}^{K-\tau}V_\tau,\label{eqn:recursion_tau}
\end{equation}
where $(*)$ holds from the definition of $V_K$. The appearance of $V_\tau$ reflects a finite burn-in period where the bounded staleness condition has not yet kicked in. To complete the analysis, we need to argue that at the end of this period, $V_{\tau}$ remains bounded. This is the subject of the next result. 

\begin{lemma}[Burn-In Effects]\label{lem:burnin}
Define $B:=\max\{\norm{x^*}, \norm{x_1^*},\cdots\norm{x_N^*}\}$, where $x_i^* \in \argmin_{x \in \mathbb{R}^d} f_i(x)$. With $\alpha = 1/(16 L \tau)$, the following is true: 
    $V_\tau\leq 3LB^2.$
\end{lemma}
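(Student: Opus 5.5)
The plan is to control the iterates during the burn-in window $0\le k\le \tau$ by a crude growth argument. Since $x_0=0$ and $\alpha=1/(16L\tau)$ is small relative to $1/(L\tau)$, the iterates cannot drift far from the origin in $\tau$ steps. Once $\norm{x_k}$ is bounded by a constant multiple of $B$, both pieces of $V_\tau$ follow: the gap $r_\tau$ and the window term $L\alpha^2 W_\tau$. No use of the event $\cG$ is needed, so the bound holds deterministically.

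\textbf{Step 1 (stored gradients are bounded).} Let $M_k:=\max_{0\le s\le k}\norm{x_s}$. Every stored component gradient is either zero (if not yet accessed) or of the form $\nabla f_i(x_s)$ with $s\le k$. Since $\nabla f_i(x_i^*)=0$ and $f_i$ is $L$-smooth,
\[
\norm{\nabla f_i(x_s)}\le L\norm{x_s-x_i^*}\le L(M_k+B).
\]
For SAG, $g_k^{\textbf{SAG}}$ is an average of $N$ such vectors, with the $i_k$-th replaced by $\nabla f_{i_k}(x_k)$. Hence $\norm{g_k^{\textbf{SAG}}}\le L(M_k+B)$. For SAGA, the triangle inequality applied to the three terms in~\eqref{eqn:SAGA} gives $\norm{g_k^{\textbf{SAGA}}}\le 3L(M_k+B)$. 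I will use the common bound $\norm{g_k}\le 3L(M_k+B)$ for both algorithms.

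\textbf{Step 2 (growth recursion).} From~\eqref{eqn:general}, $\norm{x_{k+1}}\le \norm{x_k}+3\alpha L(M_k+B)$. Therefore
\[
M_{k+1}+B\le (1+3\alpha L)(M_k+B).
\]
With $M_0=0$ and $3\alpha L=3/(16\tau)$, this gives for all $k\le\tau$
\[
M_k+B\le \paren{1+\tfrac{3}{16\tau}}^{\tau}B\le e^{3/16}B\le 1.21B .
\]
So $\norm{x_k}\le 0.21B$ and $\norm{g_k}\le 3.63\,LB$ on the whole window.

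\textbf{Step 3 (assemble $V_\tau$).} Since $\nabla f(x^*)=0$ and $f$ is $L$-smooth,
\[
r_\tau\le \tfrac{L}{2}\norm{x_\tau-x^*}^2\le \tfrac{L}{2}(0.21B+B)^2\le 0.74\,LB^2.
\]
For the window term, $W_\tau=\sum_{j=1}^\tau(\tau-j+1)\norm{g_{\tau-j}}^2\le \tau^2\max_{k<\tau}\norm{g_k}^2\le 13.2\,\tau^2L^2B^2$. Hence
\[
L\alpha^2W_\tau\le \frac{13.2\,\tau^2L^3B^2}{256\,L^2\tau^2}\le 0.06\,LB^2.
\]
Summing the two pieces gives $V_\tau\le LB^2\le 3LB^2$, which is comfortably within the claimed bound.

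The only delicate point is Step 2. The factor $(1+c/\tau)^\tau$ must stay $\bigo{1}$, and this is exactly where the choice $\alpha\propto 1/(L\tau)$ is used. I also need to handle the convention that never-accessed gradients are set to zero, so that Step 1 holds before every component has been sampled. The constants above leave ample slack for the stated $3LB^2$.
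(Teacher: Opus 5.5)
Your proof is correct and follows essentially the same three-step route as the paper. First, the iterates over the burn-in window are controlled by a linear growth recursion whose $\tau$-fold product stays $\bigo{1}$ because $\alpha = 1/(16L\tau)$. Second, the gradients are bounded, and third, $V_\tau$ is assembled from $r_\tau \le \frac{L}{2}\norm{x_\tau - x^*}^2$ and $W_\tau \le \tau^2 \max_{k<\tau}\norm{g_k}^2$. The differences are only in bookkeeping. Bounding every stored gradient directly via $\nabla f_i(x_i^*)=0$ and running a multiplicative recursion on $M_k+B$ lets you avoid the paper's split into accessed and unaccessed components and its induction hypothesis $\norm{x_s}\le B$. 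It also gives sharper constants ($V_\tau \le LB^2$).
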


The dependence of $B$ on the minimizers $\{x_i^*\}$ of the component functions can be intuitively explained as follows. Recall that $\tau = \bigot{N}$, i.e., the initial burn-in period is on the order of the number of components $N$ (up to log factors). As such, there are bound to be certain time-steps $k < \tau$, such that, at time $k$, not every component function has been sampled at least once. Nonetheless, for both SAG and SAGA, updates to the iterates are still made during the initial burn-in period. As a result, the effective function being optimized during this phase can differ from the true one $f$, and the iterates may get biased toward the minimizers of the component functions. A possible remedy is to modify how the algorithm operates during the burn-in phase so that no iterate updates are performed during the first $\tau$ iterations, and the algorithm only collects component gradients and updates the memory during this phase. Iterate updates begin after $\tau$, once all components have been sampled at least once on the event $\cG$. In this case, one can show that $B= \Vert x^* \Vert_2$ suffices to capture burn-in effects. The details are provided in Appendix~\ref{app:burnin_modified}. When the initial condition $x_0 \neq 0$, one can derive similar bounds as in Lemma~\ref{lem:burnin} by modifying the definition of $B$ to include $\Vert x_0 \Vert_2$. The proof of this follows identical steps to that of Lemma~\ref{lem:burnin} in Appendix~\ref{app:burnin_original}. 

The proof of Lemma~\ref{lem:burnin} can be divided into three steps: (i) we show that during iterations $0\leq k\leq \tau$, the iterates are bounded by $\norm{x_k}\leq B$ using induction; (ii) using this, we show that the gradient norms for $0\leq k\leq \tau$ are bounded by $\norm{g_k}\leq 6LB$; and (iii) finally, we show that $V_\tau$ is bounded by $3LB^2$ using (i), (ii), and the definition of $V_\tau$. The details of the proof are provided in Appendix~\ref{app:burnin_original}.

We now resume our analysis. Plugging the bound for $V_\tau$ in Lemma~\ref{lem:burnin} into~\eqref{eqn:recursion_tau}, and selecting $\alpha=1/(16L\tau)$ yields
\begin{equation}
\begin{aligned}
    f(x_K) - f(x^*)&\leq 3LB^2\paren{1-\frac{1}{64\tau\kappa}}^{K}\paren{1-\frac{1}{64\tau\kappa}}^{-\tau}\\
    &\leq 6LB^2\paren{1-\frac{1}{64\tau\kappa}}^{K}.
\end{aligned}
\nonumber 
\end{equation}
In the last step, we used Bernoulli's inequality: $(1+x)^r\geq 1+rx$, where $r\geq 1$ is a positive integer and $x\geq -1$. Based on our developments in Sections~\ref{subsec:Controlling Staleness}--~\ref{subsec:Lyap}, and the fact that event $\cG$ has measure at least $1-\delta$, we have established the following result. 

 
\begin{theorem}[SAG/SAGA, Strongly Convex Case]\label{thm:sc}
Suppose that each $f_i$ in~\eqref{problem:main} is $L$-smooth, and $f$ is $\mu$-strongly convex. Given any $\delta \in (0,1)$, let $\tau= \ceil{(8N/3)\log (NK/\delta)}$, and set $\alpha = 1/(16 L \tau).$ Then, with probability at least $1-\delta$, the following holds for both SAG and SAGA when $K > \tau$: 
\begin{equation}
    f(x_K) - f(x^*)\leq 6LB^2\paren{1-\frac{1}{64\tau\kappa}}^{K} ,\label{eqn:sc}
\end{equation}
where $\kappa=L/\mu$ and $B$ is as defined in Lemma~\ref{lem:burnin}. 
\end{theorem}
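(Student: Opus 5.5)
The plan is to assemble the pieces already established: first restrict to a high-probability event, then run a deterministic Lyapunov argument on that event. With $\tau=\ceil{(8N/3)\log(NK/\delta)}$, Lemma~\ref{lem:bounded} gives $\P(\cG)\geq 1-\delta$, since $\tau\geq (8N/3)\log(NK/\delta)$. Only iterations $0\leq k\leq K-1$ matter for producing $x_K$, so the relevant part of $\cG$ is exactly what the lemma controls.

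Now fix an outcome in $\cG$; every remaining step is deterministic.

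1. Since $\alpha=1/(16L\tau)\leq 1/(4L)$, Lemma~\ref{lem:approx_descent} applies.
2. On $\cG$, Lemma~\ref{lem:e_k} and Corollary~\ref{coro} hold for all $k\geq\tau$. Feeding them into Lemma~\ref{lem:one-step} gives $V_{k+1}\leq(1-\alpha\mu/4)V_k$ for $\tau\leq k\leq K-1$. This range is nonempty because $K>\tau$.
3. Unrolling this recursion and using $r_K\leq V_K$ (as $W_K\geq 0$) yields \eqref{eqn:recursion_tau}.
4. Lemma~\ref{lem:burnin} bounds the initial term by $V_\tau\leq 3LB^2$.

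Next, substitute $\alpha\mu/4=1/(64\tau\kappa)$, writing $q:=1-1/(64\tau\kappa)$. This gives
\[
f(x_K)-f(x^*)\leq 3LB^2\,q^{K}\,q^{-\tau}.
\]
It remains to show $q^{-\tau}\leq 2$, i.e.\ $q^\tau\geq 1/2$. Since $1/(64\tau\kappa)\leq 1$, Bernoulli's inequality applies and gives
\[
q^\tau\geq 1-\frac{\tau}{64\tau\kappa}=1-\frac{1}{64\kappa}\geq \frac{63}{64}\geq\frac12,
\]
using $\kappa\geq 1$. This produces the constant $6LB^2$ in \eqref{eqn:sc}.

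The main obstacle is not the algebra, which is routine, but making sure the conditioning is legitimate. Lemma~\ref{lem:one-step} and Lemma~\ref{lem:burnin} must be pathwise statements that hold for every outcome in $\cG$, not statements in expectation. Both proofs only use smoothness, the update rule, and the delay bound from $\cG$, so this holds.

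A related point is that Lemma~\ref{lem:burnin} holds deterministically and needs no staleness control during burn-in, since staleness is unbounded there. The burn-in proof bounds the iterates by $B$ through convex-combination or induction arguments that are valid on every sample path. Once this is confirmed, the bound holds on $\cG$, hence with probability at least $1-\delta$, for both SAG and SAGA simultaneously.
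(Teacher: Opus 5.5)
Your proposal is correct and follows the paper's proof essentially step for step. You restrict to $\cG$ via Lemma~\ref{lem:bounded}, unroll the Lyapunov recursion of Lemma~\ref{lem:one-step} from $k=\tau$ to $K-1$, bound $V_\tau\le 3LB^2$ by Lemma~\ref{lem:burnin}, and absorb the factor $(1-1/(64\tau\kappa))^{-\tau}\le 2$ with Bernoulli's inequality; your remark that only $0\le k\le K-1$ matters also tidies up the paper's looser definition of $\cG$ over all $k\ge 0$.
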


\textbf{Main Takeaways.} Our result above reveals that with high-probability, SAG and SAGA converge exponentially fast to the optimal point $x^*$, where the exponent depends on the product of the condition number $\kappa$ and the staleness factor $\tau.$ As far as we are aware, Theorem~\ref{thm:sc} \emph{is the first high-probability bound that applies identically to both SAG and SAGA}. Notably, our analysis that leads to this result is significantly simpler, shorter, and self-contained compared to the highly involved and computer-aided analysis for SAG in~\citet{SAG_proof}. Since $\tau=\bigot{N}$, the exponent of convergence in~\eqref{eqn:sc} is slower by a factor of $N$ relative to the exponent for gradient descent in~\eqref{eqn:rate_GD}. Intuitively, this makes sense since $N$ iterations of SAG/SAGA lead to the same number of gradient evaluations as in one step of GD. 

Although the rate in~\eqref{eqn:rate_SAG} is better than that in~\eqref{eqn:sc}, we conjecture that this difference arises from the fact that the former is an in-expectation guarantee, while the latter is a high-probability bound. In particular, high-probability bounds need to account for tail events where the delay can indeed be on the order $\bigot{N}$. To provide further insights about our rate, consider the deterministic delayed GD update rule of the form $x_{k+1}=x_k-\alpha \nabla f(x_{k-\tau})$, where $\tau > 0$ is a constant delay. Interestingly, for this update rule, it is shown by~\citet{arjevani2020tight} that a rate on the order of $\exp(-K/(\tau \kappa))$ is, in fact, \emph{tight}. In other words, the deterioration of the exponent by the delay $\tau$ (relative to GD) is unavoidable. On a related note, observe that the logarithmic dependence on the failure probability $\delta$ 
appears in the exponent in~\eqref{eqn:sc} as opposed to a pre-factor in typical high-probability bounds. This can be attributed to the fact that the delay $\tau$ depends on $\log(1/\delta)$, and the appearance of the delay $\tau$ in the exponent seems unavoidable.

\subsection{Extension to Non-Convex Objectives}\label{sec:nonconvex}
We now show that the analysis for the strongly convex case can be easily generalized to account for non-convex objectives. To see this, observe that just on the basis of smoothness of each $f_i$, one can arrive at inequality $(c)$ of the chain of inequalities in~\eqref{eqn:onestep_lyapunov}. We then have
\begin{equation}
\begin{aligned}
    V_{k+1}&\leq r_k-\frac{\alpha}{8}\norm{\nabla f(x_k)}^2 + L\alpha^2\paren{W_k-\frac{1}{2}U_k}\\
    &\leq V_k-\frac{\alpha}{8}\norm{\nabla f(x_k)}^2,\label{eqn:nonconvex}
\end{aligned}
\end{equation}
where the second inequality follows from discarding the negative term $-L\alpha^2U_k/2$. Rearranging and telescoping~\eqref{eqn:nonconvex} from iteration $k=\tau$ to $k=K-1$ yields
\begin{equation}
    \frac{1}{K-\tau}\sum_{k=\tau}^{K-1}\norm{\nabla f(x_k)}^2\leq\frac{8 V_\tau}{(K-\tau)\alpha} \leq \frac{256 L \tau V_\tau}{K},
\end{equation}
when $K \geq 2 \tau$, and $\alpha = 1/(16 L \tau).$ Using the bound on $V_{\tau}$ from Lemma~\ref{lem:burnin} immediately yields the following result. 

\begin{theorem}[SAG/SAGA, Non-Convex Case]\label{thm:nonconvex}
Suppose that each $f_i$ in~\eqref{problem:main} is $L$-smooth. Given any $\delta \in (0,1)$, let $\tau$ and $\alpha$ be as in Theorem~\ref{thm:sc}. Then, with probability at least $1-\delta$, the following holds for both SAG and SAGA: 
\begin{equation}
    \frac{1}{K-\tau}\sum_{k=\tau}^{K-1}\norm{\nabla f(x_k)}^2\leq\frac{768 L^2B^2 \tau}{K}, \forall K \geq 2\tau. \label{eqn:nonconvex_thm}
\end{equation}
\end{theorem}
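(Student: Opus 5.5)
The plan is to repeat the strongly convex argument up to inequality $(c)$ of the chain in the proof of Lemma~\ref{lem:one-step}, then telescope instead of contracting. Throughout we work on the good event $\cG$ from~\eqref{eqn:event}. By Lemma~\ref{lem:bounded}, $\cG$ has probability at least $1-\delta$, so the final bound will hold with the stated probability.

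\textbf{Step 1 (descent without convexity).} Lemma~\ref{lem:approx_descent}, Lemma~\ref{lem:e_k}, Corollary~\ref{coro}, and Fact~\ref{fact1} use only $L$-smoothness of each $f_i$; Lemma~\ref{lem:approx_descent} needs smoothness of $f$, which follows from that of the $f_i$. The one caveat is that $r_k=f(x_k)-f(x^*)$ requires a reference point. In the nonconvex case we replace $f(x^*)$ by $\inf f$, or by any fixed constant, since only differences $r_{k+1}-r_k$ enter the argument. Steps $(a)$--$(c)$ then go through verbatim with $\alpha=1/(16L\tau)$ and give
\[
V_{k+1}\le V_k-\tfrac{\alpha}{8}\norm{\nabla f(x_k)}^2-\tfrac{L\alpha^2}{2}U_k\le V_k-\tfrac{\alpha}{8}\norm{\nabla f(x_k)}^2
\qquad\text{for all } k\ge\tau.
\]

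\textbf{Step 2 (telescoping).} Summing from $k=\tau$ to $K-1$ gives
\[
\tfrac{\alpha}{8}\sum_{k=\tau}^{K-1}\norm{\nabla f(x_k)}^2\le V_\tau-V_K\le V_\tau,
\]
provided $V_K\ge 0$. This holds when $r_K\ge 0$, which is automatic if the reference value is $\inf f$ (assuming $f$ is bounded below). Dividing by $(K-\tau)\alpha/8$ and using $K-\tau\ge K/2$ for $K\ge2\tau$ yields the bound $256L\tau V_\tau/K$.

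\textbf{Step 3 (burn-in).} Invoke Lemma~\ref{lem:burnin}, $V_\tau\le 3LB^2$, to obtain $768L^2B^2\tau/K$. This step is the main obstacle: Lemma~\ref{lem:burnin} was stated alongside strong convexity, and its appendix proof (iterates bounded by $B$ via induction, then $\norm{g_k}\le 6LB$) may implicitly use convexity or the quadratic growth of $f_i$ around $x_i^*$.

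I would first check which ingredients of that proof survive. The gradient bound $\norm{\nabla f_i(x)}\le L\norm{x-x_i^*}\le 2LB$ whenever $\norm{x}\le B$ needs only smoothness and $\nabla f_i(x_i^*)=0$. The function-gap bound $r_\tau\le \tfrac L2\norm{x_\tau-x^*}^2\le 2LB^2$ needs only smoothness and $\nabla f(x^*)=0$, with $x^*$ now read as a stationary minimizer. The delicate part is the induction $\norm{x_k}\le B$, which likely uses co-coercivity or convexity of each $f_i$.

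If that induction fails, the fallback is a cruder argument over the $\tau$ burn-in steps. With $\alpha=1/(16L\tau)$, each step changes the iterate by at most $\alpha\norm{g_k}$, and $\norm{g_k}\le 2L\max_j\norm{x_j-x_{i}^*}$. A discrete Grönwall argument over $\tau$ steps then gives growth by a factor of at most $(1+O(1/\tau))^{\tau}=O(1)$. This keeps $\norm{x_k}=O(B)$ and preserves the form $V_\tau=O(LB^2)$, possibly with worse constants. Substituting this bound into Step 2 completes the proof.
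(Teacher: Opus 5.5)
Your proposal is correct and follows the paper's proof step for step. You stop the Lyapunov chain at inequality $(c)$, drop the $-L\alpha^2U_k/2$ term, telescope from $k=\tau$ to $K-1$ using $V_K\ge 0$ and $K-\tau\ge K/2$, and plug in $V_\tau\le 3LB^2$. Your worry about the burn-in lemma is unfounded, so your fallback is unnecessary. The paper's induction for $\norm{x_k}\le B$ already is your Gr\"onwall-type argument, $\norm{x_{k+1}}\le(1+L\alpha)\norm{x_k}+3LB\alpha$. It uses only Lipschitz gradients, the triangle inequality, $\nabla f(x^*)=0$, $\nabla f_i(x_i^*)=0$ and $\alpha=1/(16L\tau)$, so $V_\tau\le 3LB^2$, and hence the constant $768$, carry over unchanged.
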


\textbf{Main Takeaway}. For smooth, non-convex objectives, it is well known that gradient descent provides a $\bigo{1/K}$ convergence rate for the object on the LHS of~\eqref{eqn:nonconvex_thm}~\cite{bubeck2015convex}. Interpreting $K/\tau$ as the ``effective'' number of iterations (due to sub-sampling), Theorem~\ref{thm:nonconvex} establishes a similar high-probability rate for SAG and SAGA, and complements in-expectation guarantees for these algorithms under non-convex objectives, established in~\citet{reddi2016fast, reddi2016stochastic, koloskova2023convergence}. 

\subsection{Extension to Markov Sampling}

\label{sec:Markov}
Thus far, we have worked under the assumption that at each iteration $k$, a component $i_k$ is sampled in an I.I.D. manner, uniformly at random from $[N].$ In this section, we will significantly relax such an assumption, and demonstrate that our analysis seamlessly extends to a more general Markov sampling scheme. Specifically, we now consider a scenario where the sampling indices $\{i_k\}_{k \geq 0}$ form the trajectory of a time-homogeneous, aperiodic, and irreducible Markov chain $\cM$ supported on $[N].$ Let $\pi$ be the stationary distribution of this ergodic chain, and, for simplicity, assume that $i_0 \sim \pi$, causing the chain to be stationary.\footnote{The assumption of stationarity can be avoided at the expense of more algebra that we omit here for clarity of exposition.}  

We note that the basic SGD algorithm has been analyzed under Markov sampling in several papers; for instance, see~\citet{duchi, sun2018markov, doanMkv}. Like us, these papers also work under the assumption that the data-generating Markov chain is ergodic. However, to  our knowledge, there are no known high-probability bounds for variance-reduced algorithms such as SAG and SAGA under Markov sampling. Our goal is to close this gap. 

With this in mind, let $\pi_{\min}:=\min_{i\in[N]}\pi_i >0$ denote the smallest entry in the stationary distribution $\pi$, representing the \emph{minimum visitation probability.} It should be noted that for our subsequent analysis, we do not require $\pi$ to be a uniform distribution over $[N]$. As such, our analysis can handle the case when the gradient estimators (for both SAG and SAGA) are \emph{biased}. To build some intuition, let us think back to the analysis under I.I.D. sampling. More than the I.I.D. aspect itself, what mattered was the fact that, with high probability, each component function is visited sufficiently often. This, in turn, ensured a bounded staleness effect, which caused the rest of the analysis to go through. Thus, as long as we can argue that under Markov sampling, a similar bounded staleness property is preserved, the remainder of the analysis will be identical to the I.I.D. case. We now show that ergodicity buys us exactly this desired property. To that end, we introduce the \textit{mixing time} function of $\cM$ following~\citet{dorfman}: $d_{mix}(k) := \sup_{i \in [N]} D_{TV}\left(\mathbb{P}(i_k \in \cdot\mid i_0 =i), \pi \right),$ where $D_{TV}$ is the \textit{total variation distance} between probability measures. Next, we define the mixing time of $\cM$ as $t_{mix} := \inf \{k\mid d_{mix}(k) \leq 1/4\}$. Using the objects defined above, we can then prove the following key result.

\begin{lemma}[Bounded Staleness under Markov Sampling]\label{lem:bounded_markov}
For any $\delta\in(0,1)$ and $\tau\geq (88t_{mix}/\pi_{\min})\log (NK/\delta)$, the following holds with probability at least $1-\delta$,
\begin{equation}
    k-\tau_{i,k}\leq \tau, \qquad \forall i\in[N], \ 0\leq k\leq K-1.
\end{equation}
\end{lemma}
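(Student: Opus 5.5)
We follow the template of Lemma~\ref{lem:bounded} and replace the i.i.d.\ Bernstein inequality with the Bernstein inequality for uniformly ergodic Markov chains from~\citet{paulin2015concentration}. Fix a component $i\in[N]$ and a start time $k_0$. Define $Y_{i,k}:=\mathbb{I}\{i_k=i\}$ and $X_{i,k}:=Y_{i,k}-\pi_i$. Because $i_0\sim\pi$, the chain is stationary, so $\E[X_{i,k}]=0$ for every $k$. Also $|X_{i,k}|\leq 1$ and $\E[X_{i,k}^2]=\pi_i(1-\pi_i)\leq \pi_i$. The event that $i$ is never sampled in the window $[k_0,k_0+\tau-1]$ is exactly
\[
\Big\{\sum_{k=k_0}^{k_0+\tau-1}X_{i,k}\leq -\tau\pi_i\Big\}.
\]
We show this event has probability at most $\delta/(NK)$. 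A union bound over the $N$ components and $K$ start times then gives the claim, just as in Lemma~\ref{lem:bounded}.

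Paulin's Bernstein bound is stated in terms of the pseudo-spectral gap $\gamma_{ps}$, and for uniformly ergodic chains $1/\gamma_{ps}\leq C\, t_{mix}$ for an absolute constant $C$. For a stationary chain and a function $h$ with $|h|\leq 1$, $\E_\pi h=0$, and $\V_\pi[h]\leq \pi_i$, the bound (one-sided version) reads roughly
\[
\P\Big(\sum_{k=1}^{\tau} h(i_k)\leq -t\Big)\leq \exp\!\Big(-\frac{t^2\gamma_{ps}}{8(\tau+1/\gamma_{ps})\V_\pi[h]+20t}\Big).
\]
Set $h=\mathbb{I}\{\cdot=i\}-\pi_i$ and $t=\tau\pi_i$. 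Using $\V_\pi[h]\leq \pi_i$ and $1/\gamma_{ps}\lesssim t_{mix}\leq\tau$, the denominator is at most a constant times $\tau\pi_i$. The exponent is therefore at least $c\,\tau\pi_i\gamma_{ps}\geq c'\tau\pi_{\min}/t_{mix}$. Since stationarity makes the window's distribution independent of $k_0$, the same bound holds for every $k_0$.

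It remains to require $\exp(-c'\tau\pi_{\min}/t_{mix})\leq \delta/(NK)$. This gives $\tau\geq (t_{mix}/(c'\pi_{\min}))\log(NK/\delta)$, and we expect tracking constants to produce the stated $88$. Two conditions are needed along the way: $\tau\geq 1/\gamma_{ps}$, so that the term $\tau+1/\gamma_{ps}\leq 2\tau$, and $\pi_{\min}\leq 1$. Both follow from the lower bound on $\tau$, since $\log(NK/\delta)\geq 1$ in the relevant regime.

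The main obstacle is the step relating Paulin's spectral quantity to $t_{mix}$ and getting explicit constants. We would use Paulin's Proposition relating $\gamma_{ps}$ to the mixing time for uniformly ergodic chains, $\gamma_{ps}\geq 1/(2t_{mix})$. We would then carefully combine the factors $8$ and $20$ in the denominator, which should yield a constant of order $88$. Everything downstream is identical to the i.i.d.\ case, so the good event $\cG$ and the rest of the analysis carry over verbatim.
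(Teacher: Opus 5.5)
Your proposal is correct and follows the paper's proof essentially step for step: Paulin's Bernstein inequality for stationary chains applied to $\mathbb{I}\{\cdot=i\}$ with $t=\tau\pi_i$, $V_f\le\pi_i$, $M=1$, the bound $\gamma_{ps}\ge 1/(2t_{mix})$, stationarity so that every window has the same law, and a union bound over the $NK$ windows. The only difference is in the bookkeeping. You use $1/\gamma_{ps}\le 2t_{mix}\le\tau$ to get $\tau+1/\gamma_{ps}\le 2\tau$ before substituting for $\gamma_{ps}$, which gives the constant $2\cdot(16+20)=72$ instead of the paper's $48+40=88$ (the paper uses $k+2t_{mix}\le 3k$). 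So the stated threshold with $88$ suffices a fortiori, and the side condition $\tau\ge 2t_{mix}$ that you flag (which the paper uses implicitly) holds whenever $N\ge 2$.
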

 The proof of Lemma~\ref{lem:bounded_markov} is provided in Appendix~\ref{app:markov}; the key technical tool we use to establish this result is a variant of Bernstein's inequality for Markov sampling developed by~\citet{paulin2015concentration}. The only distinction between Lemma~\ref{lem:bounded_markov} and Lemma~\ref{lem:bounded} lies in the dependence of $\tau$ on the minimum visitation probability $\pi_{\min}$ and the mixing time $t_{mix}$. Informed by Lemma~\ref{lem:bounded_markov}, define the new staleness parameter as $\tau = \ceil{(88t_{mix}/\pi_{\min})\log (NK/\delta)}$. Now suppose the good event $\cG$ in~\eqref{eqn:event} is defined exactly as before with this new choice of $\tau.$ Conditioned on this event $\cG$, the remainder of the analysis under Markov sampling is identical to that under I.I.D. sampling carried out in Sections~\ref{sec:bounding_error} and~\ref{subsec:Lyap}. As a result, we immediately obtain the following theorem. 
 
\begin{theorem}[SAG/SAGA, Markov Sampling]\label{thm:sc_markov} Consider the Markov sampling scheme described in Section~\ref{sec:Markov}. Suppose that each $f_i$ in~\eqref{problem:main} is $L$-smooth, and $f$ is $\mu$-strongly convex. Given any $\delta \in (0,1)$, let $\tau= \ceil{(88t_{mix}/\pi_{\min})\log (NK/\delta)}$, and set $\alpha = 1/(16 L \tau).$ Then, with probability at least $1-\delta$, the following holds for both SAG and SAGA when $K > \tau$: 
\begin{equation}
    f(x_K) - f(x^*)\leq 6LB^2\paren{1-\frac{1}{64\tau\kappa}}^{K}.
\vspace{-1mm}
\end{equation}
\end{theorem}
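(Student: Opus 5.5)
The plan is to reduce Theorem~\ref{thm:sc_markov} to the deterministic argument already carried out for Theorem~\ref{thm:sc}, using Lemma~\ref{lem:bounded_markov} in place of Lemma~\ref{lem:bounded}. First, set $\tau=\ceil{(88t_{mix}/\pi_{\min})\log(NK/\delta)}$. By Lemma~\ref{lem:bounded_markov}, the good event $\cG$ in~\eqref{eqn:event}, defined with this $\tau$, has probability at least $1-\delta$. We only need the staleness bound for $0\le k\le K-1$, which is exactly what the lemma provides.

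Next, on $\cG$, we rerun the chain of deterministic results. None of them use how $i_k$ is generated; they use only smoothness, strong convexity, the update rules~\eqref{eqn:SAG}/\eqref{eqn:SAGA}, and the staleness bound $k-\tau_{i,k}\le\tau$.
\begin{itemize}
\item Lemma~\ref{lem:approx_descent} is purely deterministic.
\item Lemma~\ref{lem:e_k} and Corollary~\ref{coro} need only bounded delays. The SAGA bound uses the triangle inequality rather than unbiasedness, so it survives when $\pi$ is non-uniform.
\item Facts~\ref{fact1}--\ref{fact2} are algebraic identities.
\end{itemize}
Hence Lemma~\ref{lem:one-step} holds verbatim with $\alpha=1/(16L\tau)$, giving $V_{k+1}\le(1-\alpha\mu/4)V_k$ for $\tau\le k\le K-1$, and therefore~\eqref{eqn:recursion_tau}.

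The remaining ingredient is the burn-in bound $V_\tau\le 3LB^2$ of Lemma~\ref{lem:burnin}. Its proof bounds $\norm{x_k}\le B$ by induction, then bounds $\norm{g_k}\le 6LB$, and finally bounds $V_\tau$. As described, this argument depends only on the step-size and the structure of the memory: which gradients are stored and which are zero. It holds for any sampling sequence, so it applies pathwise under Markov sampling. Checking that nothing in that appendix proof used i.i.d.\ sampling or uniform $\pi$ is the main point to verify. I expect it to hold, since the bound must already cover arbitrary realizations of the burn-in phase.

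Finally, the conclusion follows as before. Substituting $\alpha\mu/4=1/(64\tau\kappa)$ gives $f(x_K)-f(x^*)\le 3LB^2(1-\frac{1}{64\tau\kappa})^{K-\tau}$. Bernoulli's inequality gives $(1-\frac{1}{64\tau\kappa})^{\tau}\ge 1-\frac{1}{64\kappa}\ge\frac12$, so the factor $(1-\frac{1}{64\tau\kappa})^{-\tau}$ is at most $2$. This yields the stated bound $6LB^2(1-\frac{1}{64\tau\kappa})^K$ on $\cG$, that is, with probability at least $1-\delta$, whenever $K>\tau$.
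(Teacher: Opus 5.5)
Your proposal is correct and is essentially the paper's proof. The paper likewise redefines $\cG$ with the new $\tau$, obtains $\P(\cG)\ge 1-\delta$ from Lemma~\ref{lem:bounded_markov}, and observes that, conditioned on $\cG$, the rest of the i.i.d.\ analysis in Sections~\ref{sec:bounding_error}--\ref{subsec:Lyap} goes through unchanged, including the burn-in bound and the final Bernoulli step; your explicit checks (the SAGA error bound uses only the triangle inequality, so a non-uniform $\pi$ is harmless; Lemma~\ref{lem:burnin} holds for every realization of the sampled indices; only $0\le k\le K-1$ is needed) are precisely the points the paper's one-line reduction leaves implicit.
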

\vspace{-1mm}
\textbf{Main Takeaways.} The main takeaway is that our result above under Markov sampling matches that for the I.I.D. case, except for the fact that the staleness parameter $\tau$ now depends on the mixing time and the minimum visitation probability of the Markov chain. Essentially, up to log factors, one can now interpret $K/t_{\textrm{cov}}$ as the effective number of iterations, where $t_{\textrm{cov}} := t_{mix}/\pi_{min}$. Since Theorem~\ref{thm:sc_markov} appears to be the \emph{first high-probability bound for SAG and SAGA under Markov sampling}, we cannot comment on the tightness of our bound in terms of its dependence on $t_{cov}.$ That said, the inflation by a factor of $t_{\textrm{cov}}$ is typically seen for stochastic approximation algorithms subject to Markov sampling, when the Markov chain is supported on a finite state-space; for instance, for tabular Q-learning, see~\citet{qu}. The inflation by the mixing time $t_{mix}$ appears more generally for SGD in~\citet{nagaraj2020least}, for RL algorithms like temporal-difference learning in~\citet{bhandari_finite, srikant, mitra2024simple}, and nonlinear stochastic approximation in~\citet{chen2022finite}. 

\begin{remark}
\emph{All our bounds thus far require prior knowledge of the horizon $K$ to inform the delay parameter $\tau$, which, in turn, dictates the choice of the step-size $\alpha$. We conjecture that it should be possible to derive bounds without prior knowledge of $K$ by leveraging the ``doubling-trick" from the bandit's literature~\cite{lattimore2020bandit}.} 

\end{remark}

\section{Extension to the IAG Method}

\label{sec:IAG}
Although we have considered stochastic sampling schemes thus far, we now show that our analysis framework can easily accommodate deterministic sampling patterns, as well. To that end, we consider the classical \textbf{incremental aggregated gradient (IAG)} method, a \textit{deterministic} counterpart of SAG, introduced by~\citet{blatt2007convergent}. The gradient estimator $g_k^\textbf{IAG}$ of IAG has the same aggregated form as SAG in~\eqref{eqn:SAG}, and the iterate is also updated via~\eqref{eqn:general}. The key difference is that the component functions are sampled one at a time in \textit{any} deterministic order, such that every component is sampled at least once in every $\tau$ iterations, i.e., we have  
\begin{equation}
    k>\tau_{i, k} \geq k-\tau,
\label{eqn:samp_delay}
\end{equation}
where $\tau\in\N^+$ is some prescribed parameter, and $\tau_{i,k}$ has the same meaning as before. This condition coincides exactly with the high probability event $\cG$ in~\eqref{eqn:event}, where the maximum delay in sampling any component is $\tau$. As a result, for the IAG algorithm, event $\cG$ occurs with probability 1. Consequently, Theorems~\ref{thm:sc} and~\ref{thm:nonconvex} hold for IAG \textbf{deterministically} without any modification. We record this observation below for the strongly convex case. 

\begin{theorem}[IAG, Strongly Convex Case]\label{thm:sc_iag}
Suppose that each $f_i$ in~\eqref{problem:main} is $L$-smooth, and $f$ is $\mu$-strongly convex. Consider the IAG method with a sampling pattern that satisfies~\eqref{eqn:samp_delay}. With $\alpha = 1/(16 L \tau),$ the following holds:
\begin{equation}
    f(x_K) - f(x^*)\leq 6LB^2\paren{1-\frac{1}{64\tau\kappa}}^{K}, \forall K > \tau.\label{eqn:sc_iag}
\end{equation}
\end{theorem}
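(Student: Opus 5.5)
The plan is to show that the entire SAG analysis goes through for IAG without change, because the sampling condition~\eqref{eqn:samp_delay} is exactly the deterministic version of the good event $\cG$ in~\eqref{eqn:event}. The first step is to check that $\cG$ holds surely. For every $i\in[N]$ and every $k\geq 0$, condition~\eqref{eqn:samp_delay} gives $k-\tau_{i,k}\leq \tau$. So every statement earlier in the paper that was proved ``on event $\cG$'' applies to every IAG trajectory, with no probabilistic argument and no $\delta$. Lemma~\ref{lem:bounded} is needed only to guarantee $\cG$, and here it is simply replaced by the assumption.

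Next I will go through the chain of lemmas in order and confirm that each one uses only the SAG form of the update and the delay bound. Lemma~\ref{lem:approx_descent} uses only~\eqref{eqn:general} and the $L$-smoothness of $f$. For Lemma~\ref{lem:e_k}, $g_k^{\textbf{IAG}}$ has the same aggregated form as~\eqref{eqn:SAG}, so the SAG computation applies verbatim: the telescoping step $(c)$ needs only $k-\tau_{i,k}\leq\tau$, and this gives $\norm{e_k}^2\leq 4L^2\tau\alpha^2U_k$ for $k\geq\tau$. Corollary~\ref{coro} follows as before, and Facts~\ref{fact1}--\ref{fact2} are purely algebraic. Lemma~\ref{lem:one-step} then gives $V_{k+1}\leq(1-\alpha\mu/4)V_k$ for all $k\geq\tau$ with $\alpha=1/(16L\tau)$, and iterating it yields~\eqref{eqn:recursion_tau}.

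The step that needs the most care is the burn-in bound, Lemma~\ref{lem:burnin}, because its appendix proof is written for the stochastic setting. I would recheck its three sub-steps for an arbitrary deterministic order. First, $\norm{x_k}\leq B$ for $k\leq\tau$ by induction. Second, $\norm{g_k}\leq 6LB$. Third, $V_\tau\leq 3LB^2$ from the definition of $V_\tau$ and $\alpha=1/(16L\tau)$. I expect all three to use only the structure of the update: stored gradients are zero until first accessed, and before that the effective objective is an average over the components seen so far, whose minimizers lie in the ball of radius $B$. None of them depends on the indices being random, so they hold for IAG as well. Note also that IAG, like SAG, starts with empty memory, so the convention $\nabla f_i(x_{\tau_{i,k}})=0$ before first access is the same.

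Finally, I combine $V_\tau\leq 3LB^2$ with~\eqref{eqn:recursion_tau} and use Bernoulli's inequality to get $(1-\frac{1}{64\tau\kappa})^{-\tau}\leq 2$. This is exactly the computation preceding Theorem~\ref{thm:sc}, and it gives~\eqref{eqn:sc_iag} for all $K>\tau$, deterministically.
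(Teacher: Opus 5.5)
Your proposal is correct and matches the paper's argument: condition~\eqref{eqn:samp_delay} makes the event $\cG$ hold surely, so Lemmas~\ref{lem:approx_descent}, \ref{lem:e_k}, \ref{lem:one-step} and the pathwise burn-in bound of Lemma~\ref{lem:burnin} carry over verbatim to IAG and give~\eqref{eqn:sc_iag} deterministically. One clarification: the paper proves Claim~\ref{claim1} not through an ``effective objective'' argument but through the crude recursion $\norm{x_{k+1}}\leq(1+L\alpha)\norm{x_k}+3LB\alpha$ run for at most $\tau$ steps with $L\alpha\tau=1/16$, and this recursion is what makes the burn-in bound independent of the sampling order.
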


\textbf{Main Takeaways.} Comparing Theorem~\ref{thm:sc_iag} with Theorem~\ref{thm:sc}, we note that the deterministic guarantee for IAG is \emph{identical} to the high-probability bound we derived earlier for SAG/SAGA. Such a finding appears to be new. Perhaps more interestingly, our developments so far reveal that a single proof technique suffices to provide a unified treatment of both stochastic and deterministic variance-reduced algorithms. In addition to this unification, a key contribution of our work is that it significantly improves upon the best known convergence rate for IAG, as we discuss below. 

\textit{Tighter bounds for IAG.} As far as we are aware, the best known rate for the IAG method for smooth and strongly convex objectives was derived by~\citet{gurbuzbalaban2017convergence}, and is as follows:
\begin{equation}
    f(x_K) - f(x^*)\leq \frac{L}{2}\paren{1-\frac{c_\tau}{(\kappa+1)^2}}^{2K}\norm{x_0-x^*}^2,
    \nonumber
\end{equation}
where $c_\tau:=2/(25\tau(2\tau+1))$. From the above display, we note that while the convergence is still exponential, the exponent scales \emph{quadratically} in both the condition number $\kappa$, and the delay $\tau$. In sharp contrast, our analysis for IAG in Theorem~\ref{thm:sc_iag} is able to achieve a \emph{linear} dependence in both $\kappa$ and $\tau$. This is a significant improvement for ill-conditioned problems. Moreover, note that for a simple cyclic sampling pattern, $\tau = N-1$. Thus, for modern ERM problems, where $N$ represents a potentially large number of data samples, our rate marks a considerable improvement over prior work. Notably, such an improvement is a free byproduct of our unified proof strategy, and requires no extra work beyond what we did for SAG/SAGA. 

\section{Conclusion and Future Work}

We introduced a novel \emph{unified} proof framework that yields linear convergence rates for celebrated variance-reduced algorithms such as SAG, SAGA, and IAG. In the process, we provided the first high-probability bounds for SAG and SAGA, and significantly sharpened known rates for IAG. Finally, we showed that our proof techniques can easily accommodate Markov sampling schemes. 

While the point of this paper was to argue that a single proof technique applies identically to both SAG and SAGA, our analysis framework is by no means limited to just these two algorithms. Indeed, for \emph{single-loop} VR algorithms with stochastic sub-sampling, we anticipate that much of our ideas will carry through. For instance, the ``bounded delay'' event (Lemma~\ref{lem:bounded}), where the staleness of all component gradients is bounded, is a property of the sampling scheme and not the particular algorithm. As a result, Lemma~\ref{lem:bounded} would be directly applicable to \emph{any} single-loop VR algorithm under I.I.D. sub-sampling. Conditioned on the ``bounded-delay event", we conjecture that the rest of the analysis should follow similarly for other VR methods, where we first establish a one-step descent (Lemma~\ref{lem:approx_descent}), and bound the gradient error (Lemma~\ref{lem:e_k}). The latter would then inform the choice of the Lyapunov function for the specific VR method under study. As future work, we plan to verify this conjecture by applying our proof framework to broader classes of single-loop methods (e.g., second-order, proximal, accelerated, distributed, etc.)

On a related note, \emph{double-loop} methods such as SVRG~\cite{SVRG} and SARAH~\cite{SARAH} periodically reset their reference points and gradient estimators. In this case, the staleness is \emph{deterministically} bounded by the outer-loop length, eliminating the need for a high-probability bounded-delay event. Although a Lyapunov analysis similar to that pursued in this paper may still be possible, it is unclear whether new insights would emerge from it.

\section*{Acknowledgments}
This work is partially funded by the following grants from the National Science Foundation: NSF-CCF-2225555 and NSF CAREER award 2542396.

\section*{Impact Statement}
This paper presents work whose goal is to advance the field of Machine
Learning. We cannot think of any potential societal consequences of our work that need to be specifically highlighted here.

\clearpage
\bibliography{bib.bib}

@article{HPB1,
  title={A high probability analysis of adaptive sgd with momentum},
  author={Li, Xiaoyu and Orabona, Francesco},
  journal={arXiv preprint arXiv:2007.14294},
  year={2020}
}

@inproceedings{HPB2,
  title={High probability convergence of stochastic gradient methods},
  author={Liu, Zijian and Nguyen, Ta Duy and Nguyen, Thien Hang and Ene, Alina and Nguyen, Huy},
  booktitle={International Conference on Machine Learning},
  pages={21884--21914},
  year={2023},
  organization={PMLR}
}

@inproceedings{HPB3,
  title={High-probability bounds for stochastic optimization and variational inequalities: the case of unbounded variance},
  author={Sadiev, Abdurakhmon and Danilova, Marina and Gorbunov, Eduard and Horv{\'a}th, Samuel and Gidel, Gauthier and Dvurechensky, Pavel and Gasnikov, Alexander and Richt{\'a}rik, Peter},
  booktitle={International Conference on Machine Learning},
  pages={29563--29648},
  year={2023},
  organization={PMLR}
}

@inproceedings{reddi2016stochastic,
  title={Stochastic variance reduction for nonconvex optimization},
  author={Reddi, Sashank J and Hefny, Ahmed and Sra, Suvrit and Poczos, Barnabas and Smola, Alex},
  booktitle={International conference on machine learning},
  pages={314--323},
  year={2016},
  organization={PMLR}
}

@inproceedings{reddi2016fast,
  title={Fast incremental method for smooth nonconvex optimization},
  author={Reddi, Sashank J and Sra, Suvrit and P{\'o}czos, Barnab{\'a}s and Smola, Alex},
  booktitle={2016 IEEE 55th conference on decision and control (CDC)},
  pages={1971--1977},
  year={2016},
  organization={IEEE}
}

@article{gower2020variance,
  title={Variance-reduced methods for machine learning},
  author={Gower, Robert M and Schmidt, Mark and Bach, Francis and Richt{\'a}rik, Peter},
  journal={Proceedings of the IEEE},
  volume={108},
  number={11},
  pages={1968--1983},
  year={2020},
  publisher={IEEE}
}

@article{bertsekas2011incremental,
  title={Incremental gradient, subgradient, and proximal methods for convex optimization: A survey},
  author={Bertsekas, Dimitri P and others},
  journal={Optimization for Machine Learning},
  volume={2010},
  number={1-38},
  pages={3},
  year={2011}
}

@article{SVRG,
  title={Accelerating stochastic gradient descent using predictive variance reduction},
  author={Johnson, Rie and Zhang, Tong},
  journal={Advances in neural information processing systems},
  volume={26},
  year={2013}
}

@inproceedings{SARAH,
  title={SARAH: A novel method for machine learning problems using stochastic recursive gradient},
  author={Nguyen, Lam M and Liu, Jie and Scheinberg, Katya and Tak{\'a}{\v{c}}, Martin},
  booktitle={International conference on machine learning},
  pages={2613--2621},
  year={2017},
  organization={PMLR}
}

@inproceedings{FINITO,
  title={Finito: A faster, permutable incremental gradient method for big data problems},
  author={Defazio, Aaron and Domke, Justin and others},
  booktitle={International Conference on Machine Learning},
  pages={1125--1133},
  year={2014},
  organization={PMLR}
}

@article{MISO,
  title={Incremental majorization-minimization optimization with application to large-scale machine learning},
  author={Mairal, Julien},
  journal={SIAM Journal on Optimization},
  volume={25},
  number={2},
  pages={829--855},
  year={2015},
  publisher={SIAM}
}

@article{S2GD,
  title={Semi-stochastic gradient descent methods},
  author={Kone{\v{c}}n{\`y}, Jakub and Richt{\'a}rik, Peter},
  journal={Frontiers in applied mathematics and statistics},
  volume={3},
  pages={9},
  year={2017},
  publisher={Frontiers Media SA}
}

@article{SDCA,
  title={Stochastic dual coordinate ascent methods for regularized loss},
  author={Shalev-Shwartz, Shai and Zhang, Tong},
  journal={The Journal of Machine Learning Research},
  volume={14},
  number={1},
  pages={567--599},
  year={2013},
  publisher={JMLR. org}
}

@article{gurbuzbalaban2017convergence,
  title={On the convergence rate of incremental aggregated gradient algorithms},
  author={Gurbuzbalaban, Mert and Ozdaglar, Asuman and Parrilo, Pablo A},
  journal={SIAM Journal on Optimization},
  volume={27},
  number={2},
  pages={1035--1048},
  year={2017},
  publisher={SIAM}
}

@article{blatt2007convergent,
  title={A convergent incremental gradient method with a constant step size},
  author={Blatt, Doron and Hero, Alfred O and Gauchman, Hillel},
  journal={SIAM Journal on Optimization},
  volume={18},
  number={1},
  pages={29--51},
  year={2007},
  publisher={SIAM}
}

@article{mokhtari2018surpassing,
  title={Surpassing gradient descent provably: A cyclic incremental method with linear convergence rate},
  author={Mokhtari, Aryan and Gurbuzbalaban, Mert and Ribeiro, Alejandro},
  journal={SIAM Journal on Optimization},
  volume={28},
  number={2},
  pages={1420--1447},
  year={2018},
  publisher={SIAM}
}

@article{robbins1951stochastic,
  title={A stochastic approximation method},
  author={Robbins, Herbert and Monro, Sutton},
  journal={The annals of mathematical statistics},
  pages={400--407},
  year={1951},
  publisher={JSTOR}
}

@article{moulines2011non,
  title={Non-asymptotic analysis of stochastic approximation algorithms for machine learning},
  author={Moulines, Eric and Bach, Francis},
  journal={Advances in neural information processing systems},
  volume={24},
  year={2011}
}

@inproceedings{SAG,
  title={A stochastic gradient method with an exponential convergence rate for finite training sets},
  author={Roux, Nicolas and Schmidt, Mark and Bach, Francis},
  booktitle={Advances in Neural Information Processing Systems},
  volume={25},
  year={2012}
}

@article{SAG_proof,
  title={Minimizing finite sums with the stochastic average gradient},
  author={Schmidt, Mark and Le Roux, Nicolas and Bach, Francis},
  journal={Mathematical Programming},
  volume={162},
  number={1},
  pages={83--112},
  year={2017},
  publisher={Springer}
}

@inproceedings{SAGA,
  title={SAGA: A fast incremental gradient method with support for non-strongly convex composite objectives},
  author={Defazio, Aaron and Bach, Francis and Lacoste-Julien, Simon},
  booktitle={Advances in Neural Information Processing Systems},
  volume={27},
  year={2014}
}

@inproceedings{dorfman,
  title={Adapting to mixing time in stochastic optimization with markovian data},
  author={Dorfman, Ron and Levy, Kfir Yehuda},
  booktitle={International Conference on Machine Learning},
  pages={5429--5446},
  year={2022},
  organization={PMLR}
}

@article{duchi,
  title={Ergodic mirror descent},
  author={Duchi, John C and Agarwal, Alekh and Johansson, Mikael and Jordan, Michael I},
  journal={SIAM Journal on Optimization},
  volume={22},
  number={4},
  pages={1549--1578},
  year={2012},
  publisher={SIAM}
}

@inproceedings{sun2018markov,
  title={On {Markov} chain gradient descent},
  author={Sun, Tao and Sun, Yuejiao and Yin, Wotao},
  booktitle={Advances in Neural Information Processing Systems},
  volume={31},
  year={2018}
}

@article{doanMkv,
  title={Finite-Time Analysis of Markov Gradient Descent},
  author={Doan, Thinh T},
  journal={IEEE Transactions on Automatic Control},
  year={2022},
  publisher={IEEE}
}

@incollection{boucheron2003concentration,
  title={Concentration inequalities},
  author={Boucheron, St{\'e}phane and Lugosi, G{\'a}bor and Bousquet, Olivier},
  booktitle={Summer school on machine learning},
  pages={208--240},
  year={2003},
  publisher={Springer}
}

@article{bubeck2015convex,
  title={Convex optimization: Algorithms and complexity},
  author={Bubeck, S{\'e}bastien and others},
  journal={Foundations and Trends{\textregistered} in Machine Learning},
  volume={8},
  number={3-4},
  pages={231--357},
  year={2015},
  publisher={Now Publishers, Inc.}
}

@article{paulin2015concentration,
  title={Concentration inequalities for {Markov} chains by Marton couplings and spectral methods},
  author={Paulin, Daniel},
  year={2015}
}

@inproceedings{qu,
  title={Finite-time analysis of asynchronous stochastic approximation and $ Q $-learning},
  author={Qu, Guannan and Wierman, Adam},
  booktitle={Conference on Learning Theory},
  pages={3185--3205},
  year={2020},
  organization={PMLR}
}

@article{nagaraj2020least,
  title={Least squares regression with markovian data: Fundamental limits and algorithms},
  author={Nagaraj, Dheeraj and Wu, Xian and Bresler, Guy and Jain, Prateek and Netrapalli, Praneeth},
  journal={Advances in neural information processing systems},
  volume={33},
  pages={16666--16676},
  year={2020}
}

@inproceedings{bhandari_finite,
  title={A finite time analysis of temporal difference learning with linear function approximation},
  author={Bhandari, Jalaj and Russo, Daniel and Singal, Raghav},
  booktitle={Conference on Learning Theory},
  pages={1691--1692},
  year={2018},
  organization={PMLR}
}

@inproceedings{srikant,
  title={Finite-time error bounds for linear stochastic approximation and {TD} learning},
  author={Srikant, Rayadurgam and Ying, Lei},
  booktitle={Conference on Learning Theory},
  pages={2803--2830},
  year={2019},
  organization={PMLR}
}

@article{chen2022finite,
  title={Finite-sample analysis of nonlinear stochastic approximation with applications in reinforcement learning},
  author={Chen, Zaiwei and Zhang, Sheng and Doan, Thinh T and Clarke, John-Paul and Maguluri, Siva Theja},
  journal={Automatica},
  volume={146},
  pages={110623},
  year={2022},
  publisher={Elsevier}
}

@article{mitra2024simple,
  title={A simple finite-time analysis of td learning with linear function approximation},
  author={Mitra, Aritra},
  journal={IEEE Transactions on Automatic Control},
  year={2024},
  publisher={IEEE}
}

@inproceedings{arjevani2020tight,
  title={A tight convergence analysis for stochastic gradient descent with delayed updates},
  author={Arjevani, Yossi and Shamir, Ohad and Srebro, Nathan},
  booktitle={Algorithmic Learning Theory},
  pages={111--132},
  year={2020},
  organization={PMLR}
}

@article{koloskova2023convergence,
  title={On convergence of incremental gradient for non-convex smooth functions},
  author={Koloskova, Anastasia and Doikov, Nikita and Stich, Sebastian U and Jaggi, Martin},
  journal={arXiv preprint arXiv:2305.19259},
  year={2023}
}

@book{lattimore2020bandit,
  title={Bandit algorithms},
  author={Lattimore, Tor and Szepesv{\'a}ri, Csaba},
  year={2020},
  publisher={Cambridge University Press}
}
\bibliographystyle{icml2026}

\newpage
\appendix
\onecolumn
\section{Useful Facts}
To keep the paper self-contained, we recall the basic definitions and implications of smoothness and strong-convexity used in the main text. For proofs of the implications, we refer the reader to~\citet{bubeck2015convex}. 
\begin{definition}[Smoothness]
A function $f:\R^d\to \R$ is $L$-smooth if for any $x,y\in\R^d$, the following holds:
\begin{equation}
    \norm{\nabla f(x)-\nabla f(y)}\leq L\norm{x-y},
\end{equation}
where $\nabla(\cdot)$ denotes the gradient operator.
\end{definition}

An immediate consequence of smoothness is the following:
\begin{equation}
    f(y)-f(x) \leq \langle y-x, \nabla f(x) \rangle +\frac{L}{2}{\Vert y-x \Vert}^2, \forall x, y \in \mathbb{R}^d.
\label{eqn:smooth1}
\end{equation}
\begin{definition}[Strong Convexity]
A function $f:\R^d\to \R$ is $\mu$-strongly convex if the following holds for any $x,y\in\R^d$:
\begin{equation}
    f(y)\geq f(x)+\iprod{\nabla f(x)}{y-x} + \frac{\mu}{2}\norm{x-y}^2.
\end{equation}
\end{definition}

The following gradient-domination property is a consequence of strong-convexity: 

\begin{equation}
    {\Vert \nabla f(y) \Vert}^2 \geq 2\mu (f(y)-f(x^*)), \forall y \in \mathbb{R}^d. 
\label{eqn:grad_low}
\end{equation}

\newpage
\section{Proof of Lemma~\ref{lem:burnin}}\label{app:burnin}
{In this section, we provide proofs of Lemma~\ref{lem:burnin} under two cases: (i) the original scheme for SAG/SAGA where updates are made during the initial burn-in period, and (ii) a modified scheme where no iterate updates are performed during the first $\tau$ iterations. As we shall see, the original scheme incurs dependence on the local minimizers of the component functions due to updates towards biased directions, while the modified scheme only depends on the global minimizer of $f$ as is standard with in-expectation analyses.}

\subsection{Original Scheme}\label{app:burnin_original}

As stated immediately after Lemma~\ref{lem:burnin}, the proof is divided into three steps: (i) proving bounded iterates, (ii) proving bounded gradients, and (iii) proving bounded $V_\tau$. To this end, we first state the following claim implying that the iterates are deterministically bounded for $0\leq k\leq \tau$:
\begin{claim}[Bounded Iterates]\label{claim1}
Fix $0\leq k\leq \tau$, we claim that the following holds for both SAG and SAGA:
\begin{equation}
    \norm{x_k}\leq B,
\end{equation}
where $B$ is as defined in Lemma~\ref{lem:burnin}. 
\end{claim}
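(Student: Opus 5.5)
The plan is a strong induction on $k$, using a quantitative hypothesis that is stronger than the claim. Specifically, I will show that for every $0\le k\le \tau$,
\[
\norm{x_k}\le \frac{6k}{16\tau}\,B ,
\]
which is at most $\tfrac{6}{16}B\le B$ because $k\le\tau$. The base case holds because $x_0=0$. The argument is purely deterministic: it does not use the event $\cG$ or the sampling pattern, only $L$-smoothness of each $f_i$, the definition of $B$, and the step-size $\alpha=1/(16L\tau)$.

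For the inductive step, assume the bound holds for all $j\le k$, with $k<\tau$. Then every iterate used so far lies in the ball of radius $B$. The key pointwise estimate uses $\nabla f_i(x_i^*)=0$ together with $L$-smoothness of $f_i$. For any $j\le k$ and any $i$,
\[
\norm{\nabla f_i(x_j)}=\norm{\nabla f_i(x_j)-\nabla f_i(x_i^*)}\le L\paren{\norm{x_j}+\norm{x_i^*}}\le 2LB .
\]
The stored gradients $\nabla f_i(x_{\tau_{i,k}})$ are evaluated at earlier iterates, so they obey the same bound. Entries for components not yet sampled are zero, which is even better.

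Next I bound $\norm{g_k}$ with the triangle inequality applied to \eqref{eqn:SAG} and \eqref{eqn:SAGA}. For SAGA, the average of stored gradients contributes at most $2LB$, and the correction $\nabla f_{i_k}(x_k)-\nabla f_{i_k}(x_{\tau_{i_k,k}})$ contributes at most $4LB$, so $\norm{g_k}\le 6LB$. For SAG the correction carries a factor $1/N$, so the same bound $6LB$ holds with room to spare. This is exactly the gradient bound quoted in step (ii) of the outline after Lemma~\ref{lem:burnin}. Then
\[
\norm{x_{k+1}}\le\norm{x_k}+\alpha\norm{g_k}\le \frac{6k}{16\tau}B+\frac{6LB}{16L\tau}=\frac{6(k+1)}{16\tau}B ,
\]
which closes the induction and proves Claim~\ref{claim1}.

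The main obstacle is avoiding circularity. A naive hypothesis $\norm{x_j}\le B$ does not propagate: adding an increment of size $\alpha\norm{g_k}$ to a point already at distance $B$ could leave the ball. The linear-in-$k$ hypothesis solves this by letting the increments, each of size $6B/(16\tau)$, accumulate over at most $\tau$ steps to stay below $B$. This is where the $\tau$-dependence of $\alpha$ is essential. One bookkeeping point: the stale gradients must be evaluated only at iterates with index at most $k$, which holds because $\tau_{i,k}<k$.
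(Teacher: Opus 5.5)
Your proof is correct and follows essentially the paper's route: an induction from $x_0=0$ in which each step's displacement is bounded by a constant multiple of $LB\alpha$ (via smoothness, $\nabla f_i(x_i^*)=0$, and zero entries for never-sampled components), summed over at most $\tau$ steps with $\alpha=1/(16L\tau)$. The only difference is organizational. You fold the gradient bound of Claim~\ref{claim2} into the induction through the explicit hypothesis $\norm{x_k}\le 6kB/(16\tau)$. The paper instead centers at $\nabla f(x_k)$, obtains $\norm{x_{k+1}}\le(1+L\alpha)\norm{x_k}+cLB\alpha$ (with $c=3$ for SAG and $c=5$ for SAGA), and unrolls it; your version is slightly cleaner because it avoids the $(1+L\alpha)^\tau$ factor.
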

\begin{proof}
We prove this claim by induction and first focus on the analysis of SAG. The base case holds trivially by initializing $x_0=0$. Suppose that this claim holds up to iteration $0\leq k\leq \tau-1$. Then for iteration $k+1$, we have
\begin{equation}
\begin{aligned}
        x_{k+1}&\overset{(a)}{=}x_k-\frac{\alpha}{N}\paren{\sum_{i\neq i_k}\nabla f_i(x_{\tau_{i,k}})+\nabla f_{i_k}(x_k)}\\
        &\overset{(b)}{=}x_k-\alpha\nabla f(x_k) -\frac{\alpha}{N}\paren{\sum_{i\neq i_k}\nabla f_i(x_{\tau_{i,k}})+\nabla f_{i_k}(x_k)-\sum_{i=1}^N\nabla f_i(x_k)}\\
        &=x_k-\alpha\nabla f(x_k)-\frac{\alpha}{N}\sum_{i\neq i_k}\paren{\nabla f_i(x_{\tau_{i,k}}) - \nabla f_i(x_k)},
\end{aligned}
\end{equation}
where $(a)$ uses the definition of $g_k^\textbf{SAG}$, and $(b)$ holds by adding and subtracting $\nabla f(x_k)$.

Taking the 2-norm on both sides and using triangle inequality yields
\begin{equation}
\begin{aligned}
    \norm{x_{k+1}}&\leq \norm{x_k}+\alpha\norm{\nabla f(x_k)-\nabla f(x^*)}+\frac{\alpha}{N}\sum_{i\neq i_k}\norm{\nabla f_i(x_{\tau_{i,k}}) - \nabla f_i(x_k)}\\
    &\overset{(a)}{\leq} \norm{x_k}+\alpha L \norm{x_k-x^*}+\frac{\alpha}{N}\sum_{i\in\cC_k, i\neq i_k}L\norm{x_{\tau_{i,k}} - x_k}+\frac{\alpha}{N}\sum_{i\in[N]\backslash\cC_k, i\neq i_k}\norm{\nabla f_i(x_k)-\nabla f_i(x_i^*)}\\
    &\overset{(b)}{\leq} \norm{x_k}+\alpha L \paren{\norm{x_k} + \norm{x^*}}+\frac{\alpha}{N}\sum_{i\in\cC_k, i\neq i_k}L\paren{\norm{x_{\tau_{i,k}}} + \norm{x_k}}+\frac{\alpha}{N}\sum_{i\in [N]\backslash\cC_k, i\neq i_k}L\paren{\norm{x_{k}} + \norm{x_i^*}}\\
    &\overset{(c)}{\leq} (1+L\alpha)\norm{x_k}+3LB\alpha,\label{eqn:claim_1}
\end{aligned}
\end{equation}
where $(a)$ uses smoothness, and $\cC_k$ denotes the set of component indices that have been sampled at least once up to iteration $k$. The reason for this definition is that in the first inequality, $\nabla f_i(x_{\tau_{i,k}})$ might be set to 0 since it is possible that component $i$ has never been accessed up to iteration $k$. Inequality $(b)$ holds due to smoothness and the fact that $\nabla f_i(x_i^*)=0$. Inequality $(c)$ uses the definition of $B$ and the induction hypothesis up to iteration $k$. 

Iterating~\eqref{eqn:claim_1} for $k+1$ steps from $k=0$ yields
\begin{equation}
\begin{aligned}
    \norm{x_{k+1}}&\leq (1+L\alpha)^{k+1}\norm{x_0} + 3LB\alpha \sum_{j=0}^{k}(1+L\alpha)^{k-j}\\
    &\leq 3LB\alpha \cdot \tau (1+L\alpha)^\tau\leq 4LB\tau\alpha\leq \frac{B}{4}\leq B,
\end{aligned}
\end{equation}
where we use the fact that $x_0=0$, $k+1\leq \tau$, $1+x\leq e^x$, and $\alpha\leq 1/(16L\tau)$. The proof is then complete for the SAG case.

The proof for the case of SAGA carries through similarly as in the proof of Lemma~\ref{lem:e_k}. Specifically, the one-step descent of SAGA can be written as
\begin{equation}
\begin{aligned}
    x_{k+1} &= x_k - \alpha\nabla f(x_k) - \frac{\alpha}{N}\sum_{i=1}^N\paren{\nabla f_i(x_{\tau_{i,k}})-\nabla f_i(x_k)}-\alpha\paren{\nabla f_{i_k}(x_k)-\nabla f_{i_k}(x_{\tau_{{i_k},k}})}.
\end{aligned}
\end{equation}
Taking the 2-norm on both sides and using triangle inequality yields
\begin{equation}
\begin{aligned}
    \norm{x_{k+1}}&\leq \norm{x_k}+\frac{\alpha}{N}\sum_{i=1}^N\norm{\nabla f_i(x_{\tau_{i,k}}) - \nabla f_i(x_k)}+ \alpha\norm{\nabla f_{i_k}(x_k)-\nabla f_{i_k}(x_{\tau_{{i_k},k}})} +\alpha\norm{\nabla f(x_k)-\nabla f(x^*)}\\
    &\leq (1+L\alpha)\norm{x_k}+5LB\alpha.\label{eqn:claim_2}
\end{aligned}
\end{equation}
The rationale is similar to the SAG case and is hence omitted, where one needs to break the analysis into two cases: (i) component $i$ has been sampled before iteration $k$, and (ii) it has never been sampled before iteration $k$. As has been shown for the SAG analysis, both cases yield the exact same bound. Iterating~\eqref{eqn:claim_2} for $k+1$ steps from $k=0$ yields the same bound. The proof is then complete.

\end{proof}
The next claim provides an upper bound on the gradients $\norm{g_k}$ when $0\leq k\leq \tau$.
\begin{claim}[Bounded Gradients]\label{claim2}
Fix $0\leq k\leq \tau$, the following holds for both SAG and SAGA:
\begin{equation}
    \norm{g_k}\leq 6LB.
\end{equation}
\end{claim}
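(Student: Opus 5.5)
Taking Claim~\ref{claim1} as given, so that $\norm{x_j}\leq B$ for all $0\leq j\leq \tau$, I will bound $\norm{g_k}$ term by term by the triangle inequality. Every stored gradient $\nabla f_i(x_{\tau_{i,k}})$ is either $0$, if component $i$ has not yet been sampled, or a gradient at an earlier iterate $x_{\tau_{i,k}}$ with $\tau_{i,k}<k\leq\tau$. Each of these iterates has norm at most $B$. The key device is to anchor each component gradient at the component's own minimizer. Since $\nabla f_i(x_i^*)=0$, $L$-smoothness gives, for any $j\leq\tau$,
\[
\norm{\nabla f_i(x_j)}=\norm{\nabla f_i(x_j)-\nabla f_i(x_i^*)}\leq L\paren{\norm{x_j}+\norm{x_i^*}}\leq 2LB .
\]
The same bound holds trivially for the stored zeros.

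For SAG, I will rewrite $g_k^\textbf{SAG}=\frac{1}{N}\big(\sum_{i\neq i_k}\nabla f_i(x_{\tau_{i,k}})+\nabla f_{i_k}(x_k)\big)$. This is an average of $N$ vectors, each of norm at most $2LB$, so $\norm{g_k^\textbf{SAG}}\leq 2LB\leq 6LB$ immediately.

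For SAGA, $g_k^\textbf{SAGA}$ consists of the average of stored gradients, which has norm at most $2LB$, plus $\nabla f_{i_k}(x_k)$, at most $2LB$, minus the stored $\nabla f_{i_k}(x_{\tau_{i_k,k}})$, at most $2LB$ (or zero if never sampled). The triangle inequality then gives $\norm{g_k^\textbf{SAGA}}\leq 6LB$, which is the constant stated.

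The only delicate point is the convention that unsampled stored gradients are set to $0$; the same case split into $\cC_k$ and $[N]\backslash\cC_k$ as in Claim~\ref{claim1} handles it, since $0$ obeys the bound. A second check is that $\tau_{i,k}\leq k\leq\tau$, so Claim~\ref{claim1} applies to every iterate that appears. Beyond these, there is no real obstacle. If one prefers to anchor at $x^*$, one can also use $\sum_i\nabla f_i(x^*)=0$ for the averaged term, but the per-component anchoring at $x_i^*$ is simpler and is what I would use.
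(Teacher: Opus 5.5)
Your argument is correct, but it uses a different decomposition from the paper's. The paper writes $g_k=\nabla f(x_k)+e_k$. It bounds $\norm{\nabla f(x_k)-\nabla f(x^*)}\leq L\norm{x_k-x^*}\leq 2LB$, and it bounds each staleness term by $\norm{\nabla f_i(x_{\tau_{i,k}})-\nabla f_i(x_k)}\leq L\norm{x_{\tau_{i,k}}-x_k}\leq 2LB$. This gives $4LB$ for SAG and $6LB$ for SAGA, the latter having one extra staleness term from the unscaled correction.

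You instead bound every individual gradient, fresh or stored, by $2LB$ by anchoring at the component minimizer $x_i^*$. You then only need that $g_k^{\textbf{SAG}}$ is an average of $N$ such vectors, and that $g_k^{\textbf{SAGA}}$ is that average plus two more. Your route is more direct and treats the zero-initialized memory uniformly. The paper's display silently needs the same $x_i^*$-anchoring for never-sampled components, which is the case split made explicit in Claim~\ref{claim1}. Your route also yields the sharper constant $2LB$ for SAG.

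The paper's route mirrors the structure of Lemma~\ref{lem:e_k} and Claim~\ref{claim1}. It invokes the $x_i^*$ only for never-sampled components, and controls all genuinely stored gradients through iterate differences and $\nabla f(x^*)=0$. Your per-component anchoring ties every term to $\max_i\norm{x_i^*}$, which costs nothing here because $B$ already contains these quantities.
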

\begin{proof}
For SAG, we can write
\begin{equation}
\begin{aligned}
    \norm{g_k^\textbf{SAG}}&\leq \frac{1}{N}\sum_{i\neq i_k}\norm{\nabla f_i(x_{\tau_{i,k}}) - \nabla f_i(x_k)} +\norm{\nabla f(x_k)-\nabla f(x^*)}\\
    &\leq 2LB + 2LB = 4LB\leq 6LB,
\end{aligned}
\end{equation}
where we used smoothness and Claim~\ref{claim1}.

For SAGA, we can write
\begin{equation}
\begin{aligned}
    \norm{g_k^\textbf{SAGA}}&\leq \frac{1}{N}\sum_{i=1}^N\norm{\nabla f_i(x_{\tau_{i,k}}) - \nabla f_i(x_k)}+ \norm{\nabla f_{i_k}(x_k)-\nabla f_{i_k}(x_{\tau_{{i_k},k}})}+\norm{\nabla f(x_k)-\nabla f(x^*)}\\
    &\leq 2LB + 2LB + 2LB = 6LB.
\end{aligned}
\end{equation}
The claim is then proved.
\end{proof}
With the above two claims, we can then bound $V_\tau$ as
\begin{equation}
\begin{aligned}
    V_\tau&=f(x_\tau) - f(x^*)+L\alpha^2\sum_{j=1}^\tau (\tau-j+1)\norm{g_{\tau-j}}^2\\
    &\leq \frac{L}{2}\norm{x_\tau-x^*}^2+L\alpha^2\tau^2\cdot 36L^2B^2\\
    &\leq \frac{L}{2}\cdot 4B^2 + 36L^3\tau^2B^2\alpha^2\\
    &\leq 2LB^2 + \frac{35}{256}LB^2\leq 3LB^2,\label{eqn:bound_V_tau}
\end{aligned}
\end{equation}
where we used smoothness, Claim~\ref{claim1} and~\ref{claim2}.

\subsection{Modified Scheme}\label{app:burnin_modified}
In this case, no updates are made during the first $\tau$ iterations, yielding $x_k=x_0=0$ for all $k\leq \tau$. As such, we can readily bound the gradient norm $\norm{g_k}$ as
\begin{equation}
    \norm{g_k}\leq 6L\norm{x_0-x^*}=6L\norm{x^*},\quad\forall k\leq \tau,
\end{equation}
following the proof of Claim~\ref{claim2} in Appendix~\ref{app:burnin_original}. $V_\tau$ is then bounded as
\begin{equation}
\begin{aligned}
    V_\tau&=f(x_\tau) - f(x^*)+L\alpha^2\sum_{j=1}^\tau (\tau-j+1)\norm{g_{\tau-j}}^2\\
    &\leq \frac{L}{2}\norm{x^*}^2+L\alpha^2\tau^2\cdot 36L^2\norm{x^*}^2\\
    &\leq 2L\norm{x^*}^2,
\end{aligned}
\end{equation}
where we use the fact that $\alpha=1/(16L\tau)$. This demonstrates that $\norm{x^*}^2$ suffices to capture the burn-in effects if no updates are made during the burn-in period.

\newpage
\section{Proof of Lemma~\ref{lem:bounded_markov}}\label{app:markov}
To begin with, we introduce the Markov sampling version of Bernstein's inequality in Theorem 3.4 of~\citet{paulin2015concentration}. 

\begin{theorem}\citep[Theorem 3.4]{paulin2015concentration} [Bernstein's Inequality for Markov Chains] Let $X_1, \cdots, X_k$ be a time-homogeneous, ergodic, and stationary Markov chain $\cM$ that takes values in a finite state space $\Omega$. Let $\gamma_{ps}$ and $\pi$ be the pseudo spectral gap and stationary distribution, respectively, of $\cM$. Suppose $f$ is a measurable function in $L^2(\pi)$, satisfying $|f(x)-\E_\pi[f]|\leq M, \forall x \in \Omega$, where $\E_\pi$ is the expectation w.r.t. $\pi$. Then, for all $t>0$, the following is true: 
\begin{equation}
    \P\paren{S-\E_\pi[S]\leq -t}\leq \exp\paren{-\frac{t^2\cdot\gamma_{ps}}{8\paren{k+1/\gamma_{ps}}V_f+20tM}},
\end{equation}
where $S:=\sum_{i=1}^kf(X_i)$, and $V_f:=\V_\pi\bracket{f}$ is the variance of $f$ under $\pi$.
\label{thm:BernMC}
\end{theorem}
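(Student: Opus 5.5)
The statement is Theorem~3.4 of \citet{paulin2015concentration}, imported as a black box. If I had to reprove it, I would use the Chernoff method with a spectral bound on the moment generating function, following Lezaud's perturbation argument as adapted by Paulin to non-reversible chains.

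\textbf{Reduction.} Set $g:=\E_\pi[f]-f$. Then $\E_\pi[g]=0$, $|g|\leq M$ and $\V_\pi[g]=V_f$, and the event $\{S-\E_\pi[S]\leq -t\}$ becomes an upper-tail event for $S_g:=\sum_{i=1}^k g(X_i)$. By Markov's inequality, for every $\theta>0$,
\begin{equation*}
\P\paren{S_g\geq t}\leq e^{-\theta t}\,\E_\pi\bracket{e^{\theta S_g}}.
\end{equation*}
By stationarity, the expectation equals $\iprod{\mathbf{1}}{(E_\theta P)^{k-1}E_\theta\mathbf{1}}_\pi$, where $P$ is the transition operator and $E_\theta$ is multiplication by $e^{\theta g}$.

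\textbf{Handling non-reversibility.} Let $j$ attain $\gamma_{ps}=\max_j \gamma\paren{(P^*)^jP^j}/j$. Split the indices into $j$ interleaved subsequences, each a chain with transition kernel $P^j$. Hölder's inequality with exponents $j$ bounds $\E_\pi[e^{\theta S_g}]$ by the product of the $j$-th roots of the subchain MGFs evaluated at $j\theta$. For a single subchain, the operator norm of $P^j$ on mean-zero functions in $L^2(\pi)$ is at most $\sqrt{1-\gamma\paren{(P^*)^jP^j}}\leq\sqrt{1-j\gamma_{ps}}$. This contraction is what replaces the spectral gap of a reversible chain. The boundary blocks, at most about $j\leq 1/\gamma_{ps}$ extra steps, are what produce the additive $1/\gamma_{ps}$ in the factor $k+1/\gamma_{ps}$.

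\textbf{Perturbation bound (main obstacle).} For each block one needs
\begin{equation*}
\log\norm{E_{\theta}^{1/2}P^jE_{\theta}^{1/2}}\lesssim \frac{\theta^2 V_f\,(\text{stuff})}{\gamma_{ps}\paren{1-c\,\theta M/\gamma_{ps}}}.
\end{equation*}
The plan is to decompose functions into their mean and mean-zero parts and expand $e^{\theta g}=1+\theta g+\theta^2 R$ with $|R|\leq g^2e^{\theta M}/2$. The first-order term vanishes because $\E_\pi[g]=0$. The second-order term is controlled by $V_f$ divided by the pseudo-gap. The higher-order terms are summed as a geometric series with ratio $O(\theta M/\gamma_{ps})$, which requires $\theta<\gamma_{ps}/(cM)$.

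Getting explicit constants in this non-reversible expansion is the delicate part. I would first try to bound the largest singular value via a $2\times2$ block-matrix (Schur-complement) argument on $\mathrm{span}\{\mathbf{1}\}\oplus\mathbf{1}^\perp$. This should lead to
\begin{equation*}
\log\E_\pi\bracket{e^{\theta S_g}}\leq\frac{4\theta^2(k+1/\gamma_{ps})V_f}{\gamma_{ps}-10\theta M}.
\end{equation*}

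\textbf{Optimization.} Finally, choose
\begin{equation*}
\theta=\frac{t\gamma_{ps}}{8(k+1/\gamma_{ps})V_f+20tM}.
\end{equation*}
This satisfies the constraint on $\theta$ and yields exactly the stated exponent after elementary algebra.
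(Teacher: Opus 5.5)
The paper gives no proof of this statement. It imports Paulin's Theorem~3.4 verbatim, as your first sentence says, and that citation is all it relies on. Your reproof sketch, however, fails at its last step. Write $A:=(k+1/\gamma_{ps})V_f$. Your bound is $\log\E_\pi\bracket{e^{\theta S_g}}\leq 4\theta^2A/(\gamma_{ps}-10\theta M)$, and your choice is $\theta=t\gamma_{ps}/(8A+20tM)$. Then $\gamma_{ps}-10\theta M=\gamma_{ps}(8A+10tM)/(8A+20tM)$, hence
\begin{equation*}
-\theta t+\frac{4\theta^2A}{\gamma_{ps}-10\theta M}=-\frac{t^2\gamma_{ps}}{8A+20tM}\cdot\frac{4A+10tM}{8A+10tM}.
\end{equation*}
This is roughly half the claimed exponent when $tM\ll A$. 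No other $\theta$ repairs it. Your bound has the sub-gamma form $v\theta^2/(2(1-b\theta))$ with $v=8A/\gamma_{ps}$ and $b=10M/\gamma_{ps}$. The Chernoff exponent of such a bound never exceeds $t^2/(2v)=t^2\gamma_{ps}/(16A)$, because $1+u-\sqrt{1+2u}\leq u^2/2$. That is strictly smaller than $t^2\gamma_{ps}/(8A+20tM)$ whenever $tM<2A/5$. To reach the stated inequality you need $v=4A/\gamma_{ps}$, that is,
\begin{equation*}
\log\E_\pi\bracket{e^{\theta S_g}}\leq\frac{2\theta^2(k+1/\gamma_{ps})V_f}{\gamma_{ps}-10\theta M},\qquad 0\leq\theta<\frac{\gamma_{ps}}{10M}.
\end{equation*}
This is the moment-generating-function estimate Paulin proves alongside the tail bound. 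Then take $\theta=t\gamma_{ps}/(4A+10tM)$, which is admissible when $V_f>0$. With these choices the Chernoff exponent equals $t^2\gamma_{ps}/(8A+20tM)$ exactly.

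More substantively, the perturbation estimate is the entire content of the theorem, and you only assert it (``this should lead to''). So the sketch gives no argument for either constant. The leading term is dictated by the asymptotic variance. For a stationary kernel $Q$ with $\|Q\|_{L^2_0(\pi)}\leq 1-\lambda$, one has $\sigma_{as}^2\leq(2/\lambda-1)V_f$, so the natural per-step target is $\theta'^2V_f/\lambda$. Take $Q=P^j$ with $\lambda=1-\sqrt{1-j\gamma_{ps}}\geq j\gamma_{ps}/2$. The H\"older rescaling $\theta'=j\theta$, followed by the $1/j$-th root, then gives $2\theta^2V_f/\gamma_{ps}$ per index of the original chain, which is exactly the constant $2$. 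The factor of two lost in $\sqrt{1-j\gamma_{ps}}\leq 1-j\gamma_{ps}/2$ is already spent there. Your Schur-complement expansion therefore has no slack for a further factor of two. Either carry the expansion out with that precision, or rely on the citation, as the paper does.
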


According to Proposition 3.4 of~\citet{paulin2015concentration}, the pseudo spectral gap can be related to the mixing time $t_{mix}$ defined in Section~\ref{sec:Markov} as:
\begin{equation}
    \gamma_{ps}\geq \frac{1}{2t_{mix}}.
\end{equation}
As a result, if $k\geq t_{mix}$, we then have
\begin{equation}
\begin{aligned}
    \P\paren{S-\E_\pi[S]\leq -t}&\leq \exp\paren{-\frac{t^2}{16t_{mix}\paren{k+2t_{mix}}V_f+40t_{mix}Mt}}\\
    &\leq \exp\paren{-\frac{t^2}{48kt_{mix}V_f+40t_{mix}Mt}}.\label{eqn:bernstein_markov}
\end{aligned}
\end{equation}

Now we are at a position to apply this inequality to our setting where the sampling indices correspond to the $X_i$'s in Theorem~\ref{thm:BernMC}. Fix a component $i\in[N]$ and the starting iteration $k=k_0$. Consider the event that component $i$ is not sampled within any window of length $\tau$ and set
\begin{equation}
    f(x)=\mathbb{I}\{x=i\}.
\end{equation}
Then define $S_{i,\tau,k_0}$ as
\begin{equation}
    S_{i,\tau,k_0}=\sum_{k=k_0}^{k_0+\tau-1}f(i_k)=\sum_{k=k_0}^{k_0+\tau-1}\mathbb{I}\{i_k=i\},
\end{equation}
which counts visits to component $i$ in a length-$\tau$ window. Since $\E_\pi\bracket{S_{i,\tau,k_0}}=\tau\pi_i$, where $\pi_i$ is the entry of component $i$ in $\pi$, the event $\{S_{i,\tau,k_0}\leq0\}$ is equivalent to
\begin{equation}
    S_{i,\tau,k_0}-\E_\pi[S_{i,\tau,k_0}]\leq -\tau \pi_i.
\end{equation}
On the RHS of~\eqref{eqn:bernstein_markov}, we have $M=1$ since
\begin{equation}
    |f(x)-\E_\pi[f]|=\max\{\pi_i, 1-\pi_i\}\leq 1.
\end{equation}
We can also compute $V_f$ as
\begin{equation}
    V_f=\V_\pi\bracket{\mathbb{I}\{x=i\}}=\pi_i(1-\pi_i)\leq \pi_i.
\end{equation}
With the specifications above, we can apply~\eqref{eqn:bernstein_markov} as follows:
\begin{equation}
\begin{aligned}
    \P\paren{S_{i,\tau,k_0}\leq 0} &= \P\paren{S_{i,\tau,k_0}-\E_\pi[S_{i,\tau,k_0}]\leq -\tau \pi_i}\\
    &\leq \exp\paren{-\frac{\tau^2\pi_i^2}{48\tau t_{mix}\pi_i + 40\tau t_{mix} \pi_i}}\\
    &=\exp\paren{-\frac{\tau\pi_i}{88t_{mix}}}\\
    &\leq \exp\paren{-\frac{\tau\pi_{\min}}{88t_{mix}}},\label{eqn:markov_final}
\end{aligned}
\end{equation}
where in the last step we used the definition of $\pi_{\min}$. Following the same union bound argument in Lemma~\ref{lem:bounded}, requiring~\eqref{eqn:markov_final} to be smaller than $\delta/(NK)$ and union bounding over all components $i\in[N]$ and iterations $0\leq k\leq K-1$ yields the desired claim in Lemma~\ref{lem:bounded_markov}.


\end{document}